\def\eqref#1{equation~\ref{#1}}
\def\1{\bm{1}}
\def\vu{{\bm{u}}}
\def\vw{{\bm{w}}}
\def\vx{{\bm{x}}}
\def\vy{{\bm{y}}}
\def\mA{{\bm{A}}}
\def\mB{{\bm{B}}}
\def\mI{{\bm{I}}}
\def\mP{{\bm{P}}}
\def\mQ{{\bm{Q}}}
\def\mU{{\bm{U}}}
\def\mW{{\bm{W}}}
\def\mX{{\bm{X}}}
\DeclareMathAlphabet{\mathsfit}{\encodingdefault}{\sfdefault}{m}{sl}
\SetMathAlphabet{\mathsfit}{bold}{\encodingdefault}{\sfdefault}{bx}{n}
\DeclareMathOperator*{\argmin}{arg\,min}
\definecolor{deepred}{HTML}{940000}
\definecolor{lightyellow}{rgb}{1,1,0.8}
\newtheorem{theorem}{Theorem}[section]       
\newtheorem{proposition}[theorem]{Proposition} 
\definecolor{Gray}{gray}{0.95}
\definecolor{darkspringgreen}{rgb}{0.09, 0.45, 0.27}
\definecolor{americanrose}{rgb}{1.0, 0.01, 0.24}
\newenvironment{remark}{%
  \par\noindent\textbf{Remark~\theproposition.}\ \itshape}{\par} 
\newtheorem{principle}[theorem]{Principle}
\newcommand{\ie}{\emph{i.e.}}
\newcommand{\eg}{\emph{e.g.}}
\newlength\savewidth\newcommand\shline{\noalign{\global\savewidth\arrayrulewidth
  \global\arrayrulewidth 1pt}\hline\noalign{\global\arrayrulewidth\savewidth}}
\renewcommand \thepart{}
\renewcommand \partname{}
\title{\vspace{-5mm}\fontsize{16.5pt}{\baselineskip}\selectfont
Model Merging with Functional Dual Anchors\vspace{-.5mm}}
\author{
\fontsize{9.5pt}{\baselineskip}\selectfont Kexuan Shi\textsuperscript{1}~~~~~Yandong Wen\textsuperscript{2}~~~~~Weiyang Liu\textsuperscript{1}\\[.35mm]
\fontsize{9pt}{\baselineskip}\selectfont\textsuperscript{1}The Chinese University of Hong Kong~~~~\textsuperscript{2}Westlake University~~~~{\tt\href{https://spherelab.ai/fda}{\textbf{SphereLab.ai/fda}}}
\vspace{-3.5mm}
}
\begin{document}

\doparttoc 
\faketableofcontents
\maketitle

\begin{abstract}
\vspace{-1.5mm}
Model merging is an efficient post-training strategy for integrating knowledge from multiple finetuned checkpoints of a shared foundation model. Existing methods operate in the parameter space, combining task vectors to mitigate conflicts, but remain constrained by parameter inconsistencies. We propose Functional Dual Anchors (FDAs), a framework that instead models the input-representation space. FDAs are synthetic inputs whose induced gradients align with task vectors, capturing task-specific functional shifts relative to the pretrained model. This perspective bridges joint multi-task training and post-hoc merging, offering both robustness and flexibility. We further introduce a principled initialization scheme and show that FDAs are complementary to parameter-space model merging. Comprehensive experiments demonstrate the effectiveness of FDAs in model merging.
\end{abstract}

\vspace{-2mm}
\section{Introduction}
\vspace{-1.5mm}

Model merging has emerged as a promising post-training strategy for integrating knowledge from multiple finetuned checkpoints of foundation models. The core idea is to combine diverse domain knowledge from multiple homologous downstream models into a single unified one \citep{fisherweighted,regmean}. Compared to multi-task learning~\citep{ruder2017overview} and continual learning~\citep{wang2024comprehensive}, model merging is appealing because it consolidates knowledge directly through the parameters of downstream models finetuned from the same pretrained backbone.

\begin{wrapfigure}{r}{0.64\linewidth}
\vspace{-1.25em}
\centering
\includegraphics[width=1\linewidth]{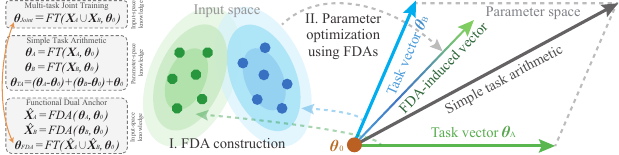}
  \vspace{-1.85em}
    \caption{\scriptsize%
    Illustration of our input-space model merging framework using FDAs. On the left, we compare multi-task joint training, task arithmetic and FDA. Inspired by joint training, FDA models the knowledge in the input space. $\theta_A=FT(\bm{X}_A,\bm{\theta}_0)$ denotes the model finetuned by the task data $\bm{X}_A$ from the initial model $\bm{\theta}_0$ with some loss function.
    }
    \label{fig:teaser}
\vspace{-0.4em}
\end{wrapfigure}

However, model merging still faces fundamental challenges due to conflicts arising from diverse task-specific knowledge. Since this knowledge is encoded in the parameters of downstream models, such conflicts inevitably manifest as parameter conflicts. The prevailing paradigm for addressing them is to scale the task vectors \citep{taskarithmetic} (\ie, parameter offsets between these downstream checkpoints and the pretrained model), and then add them back to the pretrained parameters. Within this paradigm, prior works interpret parameter conflicts as task vector conflicts \citep{yadav2023ties,dare} and propose various adjustment strategies. These methods either exploit intrinsic properties of the parameter space (\eg, magnitude \citep{yadav2023ties,dare}, similarity \citep{pcbmerging}, orthogonality \citep{awd}, or subspace structure \citep{apgd,tsv,wudi}) or leverage task-specific data to guide adjustments (\eg, entropy measures \citep{yang2023adamerging,cabs} or representation distributions \citep{regmean,weirepresentation,prodistill}). A unifying characteristic of both approaches is their emphasis on modeling the parameter space, either through structural priors or through data-driven priors.

In contrast to existing approaches, we focus on modeling the input-representation space to mitigate task-specific knowledge conflicts. Rather than directly manipulating parameter offsets, we propose generating synthetic inputs, termed functional dual anchors (FDAs), that can effectively simulate the role of task vectors. An illustration of this idea is provided in Figure~\ref{fig:teaser}. Conceptually, this is akin to projecting task-specific knowledge into the input-representation space by constructing inputs that reproduce the downstream model’s functional shift relative to the pretrained model. Specifically, for each downstream checkpoint, we construct a set of inputs whose induced gradients on the pretrained parameters align with the corresponding task vector. In this way, FDAs effectively act as the dual of task vectors. While task vectors encode task-specific knowledge in the parameter space, FDAs capture the analogous knowledge in the input space through their induced gradients. This perspective introduces a new way of thinking about knowledge consolidation. Instead of constraining adjustments to the parameter space, we shift the merging process into the input space, where representations can naturally capture task-specific variations. The key intuition is to bridge the gap between joint multi-task training, where knowledge integration inherently happens in the input space, and model merging, where it is typically confined to the parameter space. By obtaining FDAs for different task vectors, our approach can emulate the effect of joint multi-task training.

\begin{wrapfigure}{r}{0.332\linewidth}
\vspace{-1.3em}
\centering
\includegraphics[width=1\linewidth]{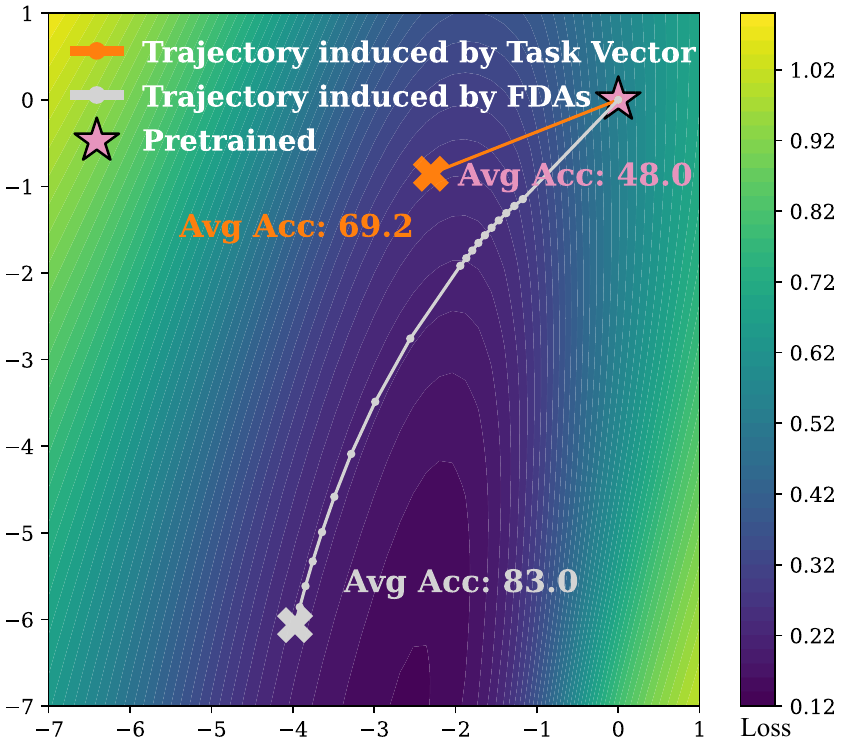}
  \vspace{-1.9em}
    \caption{\scriptsize%
    Comparison between task arithmetic and FDAs on the loss landscape of the pretrained across all 8 downstream datasets. FDAs can effectively follow the loss landscape and guide the model toward better local minima.
    }
    \label{fig:loss_landscape}
\vspace{-1em}
\end{wrapfigure}

To gain an intuitive understanding of FDAs, we compare their optimization trajectories with those of task arithmetic in Figure~\ref{fig:loss_landscape}. We treat the obtained FDAs as finetuning data and optimize the model parameters accordingly. As shown in the figure, optimizing with FDAs moves the model closer to the local minima of the loss landscape (computed over eight downstream datasets). While task vectors provide useful guidance from the pretrained model, they quickly drift away from the loss basin, whereas FDAs consistently guide optimization toward more favorable regions. Moreover, by capturing functional shifts in the input space, FDAs offer greater robustness for model merging. Unlike task vectors, which are sensitive to initialization and can drift under different starting points, FDAs exhibit robustness to such variations, facilitating more reliable model merging.

Another motivation behind FDAs is that modeling the input space is generally easier than modeling the parameter space, as the input space tends to be more structured. The effectiveness of modeling the input space for knowledge transfer is has been extensively explored and empirically validated in the context of dataset distillation~\citep{wang2018dataset,cazenavette2022dataset}, iterative teaching~\citep{liu2017iterative,qiu2023iterative}, dataset condensation~\citep{zhao2021dataset,zhao2023dataset} and continual learning~\citep{shin2017continual,yu2023continual}. Building on these insights, FDAs provide an alternative perspective on model merging by extending input-space modeling to this setting. Our major contributions are listed as follows:

\begin{itemize}[leftmargin=*,nosep]
\setlength\itemsep{0.39em}
    \item Instead of modeling the parameter space, we propose a novel model merging framework that leverages functional dual anchors to model the input-representation space for knowledge encoding.
    \item Building on theoretical insights from a linear model, we introduce a principled initialization scheme for FDAs, which leads to substantial performance improvements.
    \item While FDAs can be used independently and yield significant gains, they are also complementary to standard parameter-centric model merging methods, such as TA~\citep{taskarithmetic}, TSV~\citep{tsv}, and WUDI~\citep{wudi}. Our empirical results show that incorporating FDAs consistently improves the performance of these parameter-centric approaches.
\end{itemize}

\vspace{-1.5mm}
\section{A Model Merging Framework with Functional Dual Anchors}
\vspace{-1.5mm}

Our model merging framework consists of two stages: (1) FDA construction, and (2) parameter optimization using FDAs. Finally, we discuss the practical implementation of this framework for large-scale foundation models and present the complete procedure in Algorithm~\ref{alg:fda}.

\vspace{-1.5mm}
\subsection{Preliminaries and Background}
\vspace{-1.5mm}
\label{principles for FDAs}

Before introducing our framework, we briefly recap the formulation of model merging. Consider a foundation model $\varphi$ with pretrained parameters $\boldsymbol{\theta}_0 \in \mathbb{R}^p$ and a collection of downstream finetuned checkpoints with parameters $\{\boldsymbol{\theta}_i\}_{i=1}^m$. The goal of model merging is to derive a merged parameter $\hat{\boldsymbol{\theta}}$ from $\boldsymbol{\theta}_0$ and $\{\boldsymbol{\theta}_i\}_{i=1}^m$ that consolidates knowledge across tasks and achieves multi-task capability without requiring retraining on the original task data. The prevailing approach to model merging is to first compute the task vectors \citep{taskarithmetic} 
$\{\boldsymbol{\tau}_i = \boldsymbol{\theta}_i - \boldsymbol{\theta}_0\}_{i=1}^m$, 
apply adjustments \citep{yadav2023ties,dare,apgd} to $\{\boldsymbol{\tau}_i\}_{i=1}^m$, and then add the adjusted task vectors back to the pretrained parameter $\boldsymbol{\theta}_0$. The merged parameter $\hat{\boldsymbol{\theta}}$ is given by $\hat{\boldsymbol{\theta}} = \boldsymbol{\theta}_0 + \sum_{i=1}^m \phi_i(\boldsymbol{\tau}_i)$ where $\phi_i: \mathbb{R}^p \rightarrow \mathbb{R}^p$ is introduced to denote possible adjustments of the task vectors $\{\boldsymbol{\tau}\}_{i=1}^m$. In task arithmetic (TA) \citep{taskarithmetic}, $\{\phi_i\}_{i=1}^m$ are linear transformations with a uniform scaling factor between $0$ and $1$. $\{\phi_i\}_{i=1}^m$ can also take other forms, \eg, the magnitude of parameter values \citep{yadav2023ties} or the subspace spanned by $\{\boldsymbol{\tau}_i\}_{i=1}^m$ \citep{awd}. Recently, several works incorporate task-specific entropy measures \citep{yang2023adamerging,cabs} or representation distribution \citep{weirepresentation,prodistill} to determine $\phi_i$ through iterative optimization. For notational convenience, we use $\varphi(\boldsymbol{\theta}_0)$ to denote the model $\varphi(\boldsymbol{\theta}=\boldsymbol{\theta}_0)$.

Instead of leveraging knowledge in the parameter space, we propose to project the knowledge encoded in checkpoints into the input-representation space. Concretely, we construct a set of synthetic inputs (\ie, FDAs) whose induced gradients on the pretrained model align with task vectors.

\vspace{-1.5mm}
\subsection{\underline{FDA Construction}: Knowledge Projection via Gradient Matching}
\vspace{-1.5mm}
\label{FDAs_part1}
We aim to construct a set of inputs whose induced gradients on the pretrained model align with the task vector. These gradients can be refined by comparing representation discrepancies between the downstream checkpoints $\{\varphi(\boldsymbol{\theta}_i)\}_{i=1}^m$ and the pretrained model $\varphi(\boldsymbol{\theta}_0)$ on the constructed inputs. 
Formally, assuming the model $\varphi$ operates in a $d$-dimensional input space, 
we consider a set of $n$ input points $\{\vx_{ij}\}_{j=1}^n \subset \mathbb{R}^d$ for the downstream  checkpoint $\varphi(\boldsymbol{\theta}_i)$. We refer to these points as anchors, as they link $\varphi(\boldsymbol{\theta}_0)$ and $\varphi(\boldsymbol{\theta}_i)$.
When these anchors ideally satisfy the following objective, they constitute a set of Functional Dual Anchors (FDAs) for $\varphi(\boldsymbol{\theta}_0)$ and $\varphi(\boldsymbol{\theta}_i)$ (\ie, $\boldsymbol{\tau}_i$):
\begin{equation}
\footnotesize
\min_{\vx_{i1},\dots,\vx_{in}} \mathrm{cos\_dist}\bigg(
\nabla_{\theta}\sum_{j=1}^n 
\mathrm{Dist}\!\big(\varphi(\boldsymbol{\theta}, \vx_{ij}), \varphi(\boldsymbol{\theta}_i, \vx_{ij})\big)\bigg{|}_{\boldsymbol{\theta}=\boldsymbol{\theta}_0}, \boldsymbol{\tau}_i
\bigg),
\label{objective}
\end{equation}
where $\mathrm{cos\_dist}(\mA,\mB)=1-\frac{\mathrm{vec}(\mA)\mathrm{vec}(\mB)}{\|\mA\|_F\|\mB\|_F}$, $\mathrm{vec}$ denotes the operation that vectorizes a matrix into a vector in a row-major order, and $\mathrm{Dist}(\cdot)$ denotes a differentiable distance function measuring the representation discrepancy between $\varphi(\boldsymbol{\theta}_0)$ and $\varphi(\boldsymbol{\theta}_i)$.
We primarily use cosine distance ($\mathrm{cos\_dist}$), as semantic information is often encoded in direction rather than magnitude~\citep{liu2017deep,liu2018decoupled}. We also evaluate $\ell_1$ and $\ell_2$ distances in Section~\ref{sec:ablation_dist}. Importantly, the set $\{\vx_{ij}\}_{j=1}^n$ induces gradients from representation discrepancies that align with the task vector $\boldsymbol{\tau}_i$ in the input-representation space, and thereby serves as the FDAs of $\boldsymbol{\tau}_i$. Correspondingly, we construct a separate set of FDAs $\{\vx_{ij}\}_{j=1}^n$ for each downstream checkpoint $\varphi(\boldsymbol{\theta}_i)$, \ie, for each task vector $\boldsymbol{\tau}_i$.

\textbf{Gradient-based construction for FDAs.} Due to the non-convex nature of Eq.~\ref{objective},  we solve it with gradient descent. We perform gradient-based search in the data space $\mathcal{X}$, where the loss landscape is shaped by fixed model parameters. We refer the process of the gradient-based search in the data space as the \emph{construction process of FDAs}. This process can be formalized as:
\begin{equation}
\footnotesize
\mX_{t+1}=\mX_{t}+\eta \cdot \mathcal{U}\bigg{\{}\nabla_{\mX_t}\,\mathrm{cos\_dist}\!\Big(
\nabla_{\boldsymbol{\theta}}\sum_{j=1}^n
\mathrm{Dist}\big(\varphi(\boldsymbol{\theta}, \vx^t_{ij}),\, \varphi(\boldsymbol{\theta}_i, \vx^t_{ij})\big)\bigg{|}_{\boldsymbol{\theta}=\boldsymbol{\theta}_0},\ \boldsymbol{\tau}_i
\Big)\bigg{\}},
\label{iteration_solve}
\end{equation}
where $\mX_t=[\vx^t_{i1},\dots,\vx^t_{in}]\in \mathbb{R}^{d  \times n}$ denotes the candidate FDAs at $t$-th iteration; $\mathcal{U}$ denotes the gradient-based optimizer and $\eta$ denotes 
the update step.
While the above gradient-based optimization offers a practical solution in high-dimensional space, it may suffer from slow convergence or limited generalization due to non-convexity. To mitigate these issues, a carefully designed initialization $\mX_0$ is essential~\citep{xavier,he2015delvingdeeprectifierssurpassing}.
We therefore focus on improving initialization to address these optimization challenges. To illustrate how the choice of initialization influences the resulting solution, we begin with an analysis based on a simplified linear model.

\textbf{Linear model analysis for initialization.} We consider a linear encoder $\varphi$, \ie, $\vy=\mW \vx$ with $\mW\in \mathbb{R}^{d\times d}, \vx \in \mathbb{R}^d$. The pretrained parameters and the downstream parameters on the $i$-th task are denoted by $\mW_0$ and $\mW_i$, respectively. Assuming that $\mathrm{Dist}(\mW_0 \vx,\mW_i \vx)=\frac{1}{2}\|\mW_0 \vx-\mW_i \vx\|_2^2$, we analyze the optimization dynamics of a single anchor $\vx_t$ (with the task index omitted for clarity):
\begin{equation}
\footnotesize
    \vx_{t+1}
    =\vx_t+\eta \beta_t \,\Delta \mW^\top \Delta \mW\,\vx_t, t=0,\dots,T-1,
    \label{iteration}
\end{equation}
where $\Delta \mW=\mW_i-\mW_0$ and $\beta_t=-1/{(\|\Delta \mW\|_F\,\|\Delta \mW \vx_t\|_2\,\|\vx_t\|_2)}$. The derivation of Eq.~\ref{iteration} is provided in Appendix \ref{derivation_dynamics}. We assume that $\Delta \mW$ is a full-rank matrix and the eigenvalue magnitudes follow a long-tailed distribution. 
These assumptions are mild, as empirical evidence shows that parameter updates often follow an approximately low-rank structure~\citep{gurari2018gradientdescenthappenstiny,hu2021loralowrankadaptationlarge,zhang2025loraoneonestepgradientsuffice}.
Therefore, there exists a spectral decomposition that
$\Delta \mW^\top \Delta \mW = \mU \boldsymbol{\Lambda} \mU^\top,
\mU = [\vu_1,\dots,\vu_d] \in \mathbb{R}^{d\times d},
\boldsymbol{\Lambda} = \mathrm{diag}(\lambda_1,\dots,\lambda_d)$,
with eigenvalues $\lambda_1 > \cdots > \lambda_d > 0$ following a long-tailed distribution.
By construction, $\{\vu_i\}_{i=1}^d$ form a complete basis of the $d$-dimensional space and remain fixed throughout optimization.
Thus, we analyze the optimization trajectory by projecting $\vx_t$ onto this basis and tracking the dynamics of its coefficients, as formalized in the following proposition. The proof is provided in Appendix \ref{proof_for_prop_spectral}.

\vspace{1mm}
\begin{proposition}\label{prop:spectral}
Under the above setting, for any iteration $t$, $\vx_t$ can be expressed as the linear combination of $\{\vu_i\}_{i=1}^d$. Specifically, the coefficient $c_t^i$ associated with basis vector $\vu_i$ is given by
$c_t^i \;=\; c_0^i \prod_{j=1}^t \bigl(1 - \gamma_j \lambda_i \bigr),$
where $\gamma_j = -\eta \beta_j > 0$ and $\beta_j = -1/(\|\Delta \mW\|_F \, \|\Delta \mW \vx_j\|_2 \, \|\vx_j\|_2)$.
\label{proposition}
\end{proposition}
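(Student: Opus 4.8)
The plan is to prove the claim by induction on $t$, tracking the coefficients of $\vx_t$ in the fixed eigenbasis $\{\vu_i\}_{i=1}^d$. Write $\vx_t = \sum_{i=1}^d c_t^i \vu_i$; this expansion is always valid since $\{\vu_i\}$ is a complete basis of $\mathbb{R}^d$ by the full-rank assumption. The base case $t=0$ is immediate, as the empty product equals $1$, giving $c_0^i = c_0^i$. For the inductive step, I would substitute the eigendecomposition $\Delta\mW^\top \Delta\mW = \mU\boldsymbol{\Lambda}\mU^\top$ into the update rule Eq.~\ref{iteration}: since $\Delta\mW^\top\Delta\mW\,\vu_i = \lambda_i \vu_i$, applying the update to $\vx_t = \sum_i c_t^i \vu_i$ yields
\begin{equation}
\footnotesize
\vx_{t+1} = \sum_{i=1}^d c_t^i \vu_i + \eta\beta_t \sum_{i=1}^d c_t^i \lambda_i \vu_i = \sum_{i=1}^d c_t^i\bigl(1 + \eta\beta_t\lambda_i\bigr)\vu_i.
\label{plan_step}
\end{equation}
Reading off coefficients gives $c_{t+1}^i = c_t^i(1 + \eta\beta_t\lambda_i) = c_t^i(1 - \gamma_t\lambda_i)$ where $\gamma_t = -\eta\beta_t$. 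Composing with the inductive hypothesis $c_t^i = c_0^i\prod_{j=1}^t(1-\gamma_j\lambda_i)$ closes the induction.

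The only subtlety is well-definedness of the recursion, which I should address explicitly: $\beta_t$ involves $\|\Delta\mW\vx_t\|_2$ and $\|\vx_t\|_2$ in the denominator, so the update is only defined when $\vx_t \neq \vzero$ and $\Delta\mW\vx_t \neq \vzero$. Since $\Delta\mW$ is full-rank, $\Delta\mW\vx_t \neq \vzero$ whenever $\vx_t \neq \vzero$, so it suffices to ensure $\vx_t$ never vanishes along the trajectory. I would note that this holds generically — e.g.\ if $c_0^i \neq 0$ for all $i$ and the step sizes avoid the exceptional values $\gamma_j = 1/\lambda_i$ — which is the regime of interest for the initialization analysis that follows; alternatively one can simply take the statement as conditional on the trajectory remaining well-defined. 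The sign claim $\gamma_j > 0$ follows since $\eta > 0$ is a step size and $\beta_j < 0$ by its definition as a negative reciprocal of a product of norms.

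I do not anticipate a genuine obstacle here: the result is essentially a bookkeeping statement that the linearized dynamics Eq.~\ref{iteration} diagonalize in the eigenbasis of $\Delta\mW^\top\Delta\mW$, exactly as in classical analyses of gradient descent on quadratics. The one place warranting care is that Eq.~\ref{iteration} itself is the \emph{exact} update for the linear model (derived in Appendix~\ref{derivation_dynamics}), not a linearization — so $\beta_t$ genuinely depends on $\vx_t$ through the norms, and hence $\gamma_t = -\eta\beta_t$ is \emph{not} constant across iterations. This is why the proposition states the coefficient as a product $\prod_{j=1}^t(1-\gamma_j\lambda_i)$ with iteration-dependent factors rather than a clean power $(1-\gamma\lambda_i)^t$; the induction handles this automatically since at step $t$ the scalar $\beta_t$ (whatever its value) multiplies every eigencomponent uniformly. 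I would keep the write-up to a few lines: state the basis expansion, do the one-line base case, present display \eqref{plan_step} for the inductive step, and remark on well-definedness.
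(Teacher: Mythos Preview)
Your proposal is correct and matches the paper's own proof essentially line-for-line: project onto the eigenbasis of $\Delta\mW^\top\Delta\mW$, read off the per-coordinate recursion $c_{t+1}^i = c_t^i(1-\gamma_t\lambda_i)$, and unroll. Your added remarks on well-definedness of $\beta_t$ and the iteration-dependence of $\gamma_t$ go slightly beyond what the paper writes but are welcome clarifications rather than a different route.
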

\vspace{-0.75mm}

\begin{remark}
For a finite number of iterations $T$, when  $|1-\gamma_j \lambda_i|$ deviates significantly from $1$, 
then $|c_t^i|$ is dominated by $|1-\gamma_j \lambda_i|$ due to the exponential growth or decay of the product term, and the effect of initialization is negligible.
Conversely, if $|1-\gamma_j \lambda_i|$ is close enough to $1$ that no exponential growth or decay arises within $T$ iterations, then $|c_t^i|$ remains primarily determined by $|c_0^i|$. This latter case typically occurs when $\lambda_i$ is close to zero.
\end{remark}

\textbf{The initialization strategy.}  The above analysis suggests that the optimization  has almost no effect on components $\vu_i$ corresponding to near-zero eigenvalues. This motivates an investigation into how initial values of these components affect the convergence of the cosine similarity. 
Following the above decomposition, we express $\Delta \mW = \mQ \boldsymbol{\Lambda}' \mU^\top$,
where \(\mQ\) is an orthogonal matrix and \(\boldsymbol{\Lambda}'^2 = \boldsymbol{\Lambda}\). For the \(j\)-th row, we can write $\Delta \mW_{j,:} = \sum_{i=1}^d \alpha_{ji} \vu_i^\top, \alpha_{ji} = (\mQ \boldsymbol{\Lambda}')_{j,i}$. Here, we consider that $\mQ$ does not amplify the low-energy directions of \(\boldsymbol{\Lambda}'\) and \(\boldsymbol{\Lambda}\) and assume that the eigenvalues of $\boldsymbol{\Lambda}$ beyond the $k$-th index are near-zero, \ie, $\alpha_{j,i>k}\approx 0$. From Proposition~\ref{prop:spectral}, that means that $c^{i>k}_t \approx c^{i>k}_0$. We denote the $j$-th row of gradients induced by $\vx_t$ as $\Delta \mW^t_{j,:}$. Under the above assumptions, the cosine similarity between $\Delta \mW_{j,:}$ and $\Delta \mW^t_{j,:}$ can be approximated as:

\vspace{-6.5mm}
\begin{equation}
\footnotesize
\!\!\frac{\langle \sum_{i=1}^d \alpha_{ji} \vu_i^\top,\sum_{i=1}^d c^i_t \vu_i^\top \rangle}{\sqrt{\sum_{i=1}^d \alpha_{ji}^2}\sqrt{\sum_{i=1}^d {c^i_t}^2}} \!\approx\! \frac{\sum_{i=1}^k \alpha_{ji} c^i_t}{\sqrt{\sum_{i=1}^d \alpha_{ji}^2} \sqrt{\sum_{i=1}^k {c^i_t}^2\!+\!\sum_{i=k+1}^d {c^i_0}^2}}\!<\!\frac{\sqrt{\sum_{i=1}^k {\alpha_{ji}^2}}}{{\sqrt{\sum_{i=1}^k {\alpha_{ji}}^2\!+\!\sum_{i=k+1}^d {c^i_0}^2}}},
\label{sim_dynamics}
\end{equation}
\vspace{-2.75mm}

where $\Delta \mW'_{j,:}=\gamma_j \vx_t^\top=\gamma_j \sum_{i=1}^d c^i_t \vu_i^\top, \gamma_j=-\frac{\partial \mathcal{L}(\varphi)}{\partial (\mW_0 \vx_t)_{j,:}}$; $\mathcal{L}$ denotes the finetuning loss. From this expression, the fixed energy in the tail components \(\sum_{i=k+1}^d {c^i_0}^2\) hinders the increase of the cosine similarity at step \(t\), which in turn slows down the convergence of the optimization. Moreover, in the idealized case where the first $k$ coefficients are perfectly aligned, the upper bound is given in Eq.~\ref{sim_dynamics}.
Thus, larger initial tail energy leads to lower optimal cosine similarity, whereas smaller tail eigenvalue energy enables faster convergence and results in higher optimal cosine similarity.
Given the analysis above, we summarize an initialization principle for FDAs as follows: 

\vspace{1mm}
\begin{principle}[Initialization Principle]  
An effective initialization strategy should limit the energy of the initialization point within the tail subspace spanned by the task vector.
\end{principle}
\vspace{-0.5mm}

Following the insight from the simplified linear model analysis, we propose two simple yet effective initialization strategies that can control the tail energy.

\textbf{Initialization strategy I: Linear Weight Sampling.}
We propose to sample the row vectors of the weight matrix as anchor initializations, since they typically also follow a long-tailed spectrum and their total energy is similar to that of the overall \(\Delta \mW\), thereby avoiding excessive tail energy.
Specifically, we initialize an anchor $\vx_{ij}\in \mathbb{R}^d$ by sampling a row of weight matrix $\mW_i \in \mathbb{R}^{q\times d}$ of $\varphi(\boldsymbol{\theta}_i)$. The process is formalized as $\vx_{ij}=(\mW_i)_{l_j,:},l_j \in\{1,\dots,q\}$.

\textbf{Initialization strategy II: Scaled Gaussian Sampling.} We first draw samples from a standard normal distribution and then scale them using a coefficient $\sigma$. Sampling from a Gaussian ensures that the initialization spans the entire $\mathbb{R}^d$, avoiding zero coefficients in the decomposition along ${\vu_i}$. 
By controlling $\sigma$, we directly constrain the energy of the whole vector, which in turn limits the energy allocated to the tail subspace. The process is formalized as
$\vx_{ij} = \sigma \cdot \tilde{\vx}_{ij},
\tilde{\vx}_{ij} \sim \mathcal{N}(\mathbf{0}, \mI_d)$.

\vspace{-1mm}
\subsection{\underline{Parameter Optimization}: Leveraging FDAs for Multi-task Model Merging}
\vspace{-1mm}

We leverage the knowledge encoded in FDAs by conducting the dual process of Eq.~\ref{objective}. We first initialize the merged model with the pretrained checkpoint, and then align the output of the model with the downstream checkpoints at all the FDAs. Assume that we have obtained $m$ groups of FDAs $\{\vx_{ij}\}_{j=1}^n$, one for each $\boldsymbol{\tau}_i$. We then optimize the model parameters with the following objective:
\begin{equation}
\footnotesize
    \min_{\boldsymbol{\theta_0}} 
\sum_{i=1}^m \sum_{j=1}^{n} 
\mathrm{Dist}\!\Big( 
\varphi(\boldsymbol{\theta_0}, \vx_{ij}),
\varphi(\boldsymbol{\theta}_i, \vx_{ij})
\Big),
\label{leverage1}
\end{equation}
which is the \emph{standard adaptation from the pretrained model} (\ie, the first usage of FDAs). The default $\mathrm{Dist}$ in Eq.~\ref{leverage1} can be consistent with that in Eq.~\ref{objective}, and our ablation studies in Section~\ref{sec:ablation_dist} show that adaptation by FDAs remains robust to different choices of $\mathrm{Dist}$.
Please note that in the early optimization stage of Eq.~\ref{leverage1}, the guidance provided by FDAs approximates to the sum of task vectors. As the optimization proceeds, the guidance provided by FDAs adapts dynamically to the loss landscape of $\boldsymbol{\theta}_t$, while task vectors only prescribe a fixed linear path starting from $\boldsymbol{\theta}_0$.

\textbf{Refinement for the merged model}. In particular, we propose the second usage of FDAs by employing them to refine the task vectors obtained from such methods. Given a task vector based merged model $ \varphi \left(\boldsymbol{\theta} + \sum_{i=1}^m \phi_i(\boldsymbol{\tau}_i)\right)$, 
we can refine $\{\phi_i(\boldsymbol{\tau}_i)\}_{i=1}^m$ by minimizing the following objective:
\begin{equation}
\footnotesize
\min_{\{\phi(\boldsymbol{\tau}_i)\}_{i=1}^m} 
\sum_{i=1}^m \sum_{j=1}^{n} 
\mathrm{Dist}\!\Big( 
\varphi \big(\boldsymbol{\theta} + \sum_{i=1}^m \phi_i(\boldsymbol{\tau}_i), \vx_{ij}\big),
\varphi(\boldsymbol{\theta}_i, \vx_{ij})
\Big).
\label{leverage2}
\end{equation}
As previously introduced, $\phi_i: \mathbb{R}^p \rightarrow \mathbb{R}^p$ denotes possible adjustments of the task vector $\boldsymbol{\tau}_i$.
To demonstrate FDAs potential on complementing parameter-centric model merging, we evaluate FDAs on three representative data-free approaches, including TA~\citep{taskarithmetic}, TSV~\citep{tsv} and WUDI~\citep{wudi}.
TSV derives $\phi_i(\tau_i)$ by performing Singular Value Decomposition (SVD) and retaining the top components, while WUDI constructs them by reducing the discrepancy between $\sum_{i=1}^m \phi_i(\tau_i)$ and $\{\tau_i\}_{i=1}^m$.

\begin{algorithm}[t]
\small
\caption{Model Merging with Functional Dual Anchors}
\label{alg:fda}
\SetAlgoLined
\KwIn{Model architecture $\varphi$, pretrained parameters $\boldsymbol{\theta}_{0}$, downstream parameters $\{\boldsymbol{\theta}_i\}_{i=1}^m$}
\KwOut{Merged parameter $\hat{\boldsymbol{\theta}}$, FDAs $\{\vx^{(l)}_{ij}\}_{j=1}^n,\; 1 \leq i \leq m,\; 1 \leq l \leq L$}

\For{$l = 1$ \textbf{\KwTo} $L$}{

    \tcc{--- Stage I: FDA Construction ---}
    \For{$i = 1$ \textbf{\KwTo} $m$}{

    \textbf{Initialization \& Optimization:}  
    Initialize $\{\vx^{(l)}_{ij}\}_{j=1}^n$ using linear weight sampling or scalable Gaussian sampling as starting points and then solve the following objective with gradient descent:
        \vspace{-2mm}
        \begin{equation*}
        \scriptsize
        \{\vx^{(l)}_{ij}\}_{j=1}^n
        = \argmin_{\vx^{(l)}_{i1},\dots,\vx^{(l)}_{in}} 
        \mathrm{cos\_dist}\!\bigg(
        \nabla_{\boldsymbol{\theta}^{(l)}}\sum_{j=1}^n 
        \mathrm{Dist}\!\big(\varphi^{(l)}(\boldsymbol{\theta}^{(l)}, \vx^{(l)}_{ij}), \varphi(\boldsymbol{\theta}_i^{(l)}, \vx^{(l)}_{ij})\big)\bigg{|}_{\boldsymbol{\theta}^{(l)}=\boldsymbol{\theta}^{(l)}_0}, 
        \boldsymbol{\tau}^{(l)}_i
        \bigg).
        \end{equation*}
        \vspace{-2mm}

        Store the optimized anchors $\{\vx^{(l)}_{ij}\}_{j=1}^n$.
    }
    \tcc{--- Stage II: Parameter Optimization using FDAs ---}

    Aggregate anchors across tasks $\{ \vx^{(l)}_{ij}\}$.
    
Acquire the merged parameter by solving:
\vspace{-1.5mm}
\begin{equation*}
\scriptsize
\hat{\boldsymbol{\theta}}^{(l)}
= \argmin_{\boldsymbol{\theta}^{(l)}} 
\sum_{i=1}^m \sum_{j=1}^{n} 
\mathrm{Dist}\!\Big( 
\varphi^{(l)}(\boldsymbol{\theta}^{(l)}, \vx^{(l)}_{ij}),
\varphi^{(l)}(\boldsymbol{\theta}^{(l)}_i, \vx^{(l)}_{ij})
\Big),
\quad \text{from } \boldsymbol{\theta}_0^{(l)}.
\end{equation*}
\vspace{-3mm}
}

\KwRet{$\hat{\boldsymbol{\theta}}, \{\vx^{(l)}_{ij}\}_{j=1}^n$ for $1 \leq i \leq m,\; 1 \leq l \leq L$.}
\end{algorithm}

\vspace{-1mm}
\subsection{Practical Implementation}
\vspace{-1mm}
We discuss the practical implementation for Transformer-based foundation models in natural language~\citep{vaswani2017attention,liu2019roberta}, vision~\citep{dosovitskiy2020image,caron2021emerging}. 

\textbf{Layer-wise Construction and Adaptation.}
The construction process (Eq.~\ref{iteration_solve}) involves second-order gradients, which is challenging for the whole foundation models. 
Instead, we adopt a layer-wise strategy by partitioning the architecture $\varphi$, parameters ${\boldsymbol{\theta}_0, \boldsymbol{\theta}_i}$ into $L$ parts: $\{\varphi^{(l)}\}_{l=1}^L$, $\{\boldsymbol{\theta}_0^{(l)}\}_{l=1}^L, \{\boldsymbol{\theta}_i^{(l)}\}_{l=1}^L$. For each layer $l$, we construct FDAs for $\boldsymbol{\tau}_i^{(l)}=\boldsymbol{\theta}_i^{(l)}-\boldsymbol{\theta}_0^{(l)}$ and perform adaptation accordingly.
Please note that this strategy only requires replacing the entire model in the objectives (Eq.~\ref{objective}, \ref{leverage1}, \ref{leverage2}) with the corresponding layer-wise components.
In our settings, one Resblock is deemed one layer. 
The overall procedure is summarized as pseudocode in Algorithm~\ref{alg:fda}.

\textbf{Shape of FDAs.} 
Generally, we construct $n$ anchors $\{\vx_{ij}\}_{j=1}^n \subset \mathbb{R}^d$ for the $i$-th task, where $d$ is the representation dimensionality.
For Transformer-based models, the representation space is of the size $\mathrm{token\_num} \times \mathrm{embedding\_dim}$, as they operate at the token level. As $\mathrm{embedding\_dim}$ is fixed, the shape of FDAs is determined by $n$ and $\mathrm{token\_num}$. 
For vision tasks, we follow the default $\mathrm{token\_num}$. For natural language tasks, we set a fixed $\mathrm{token\_num}$. 
Increasing $n$ enlarges the solution space but at the cost of higher computational overhead. We discuss the effect of these hyperparameters in Section~\ref{sec:shapes of FDAs} and also list the detailed settings for experiments in Appendix \ref{sec:detail_of_FDAs_appendix}.

\textbf{The scale coefficient $\sigma$.} A smaller scaling factor $\sigma$ reduces the tail energy of anchors. However, if $\sigma$ is too small, the head energy is also suppressed, requiring more iterations. A discussion on determining $\sigma$ in practice is in Appendix \ref{sec:practical_coefficient}. For our experiments, we use $\sigma=0.01$. The effect of $\sigma$ is given in Figure~\ref{fig:optimization_loss} and Table~\ref{tab:initialization}.
\vspace{-1.5mm}
\section{Towards Understanding FDA-encoded Knowledge}
\label{discussions}
\vspace{-1.5mm}

In this section, we investigate the knowledge encoded by FDAs. We analyze their energy distribution and loss during construction, and compare them with real data in both input-representation and parameter spaces.
For analysis, FDAs are constructed from ViT-B/32 \citep{taskarithmetic} and unfolded into $[n \times \mathrm{token\_num}, \mathrm{embedding\_dim}]$ matrices. Details are in Appendix \ref{sec:extenion_sec3_appendix}.

\begin{figure}[t]
\centering
\setlength{\abovecaptionskip}{2pt}
\setlength{\belowcaptionskip}{-8pt}
\vspace{-.75em}
\includegraphics[width=1.0\textwidth]{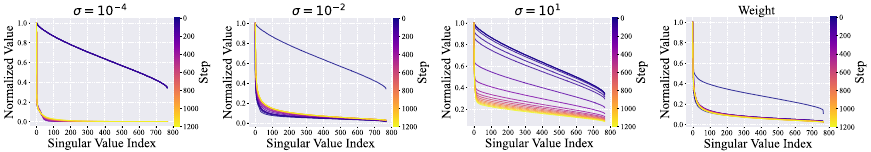}
\vspace{-1.1em}
\caption{\scriptsize Evolution of Normalized singular values of FDAs in the FDA construction. We visualize the results of FDAs from the $12$-th layer of the ViT-B/32 checkpoint on MNIST. $\sigma=10^1$ denotes FDAs initialized by sampling from $\mathcal{N}(\mathbf{0}, \mI_d)$ and scaling by $10^1$; “Weight” denotes FDAs initialized from linear weight. FDAs of different initialization schemes tend to evolve into long-tailed structures.}
\label{fig:long-tailed}
\vspace{-.15em}
\end{figure}

\begin{wrapfigure}{r}{0.25\linewidth}
\vspace{-1.1em}
\centering
\includegraphics[width=1\linewidth]{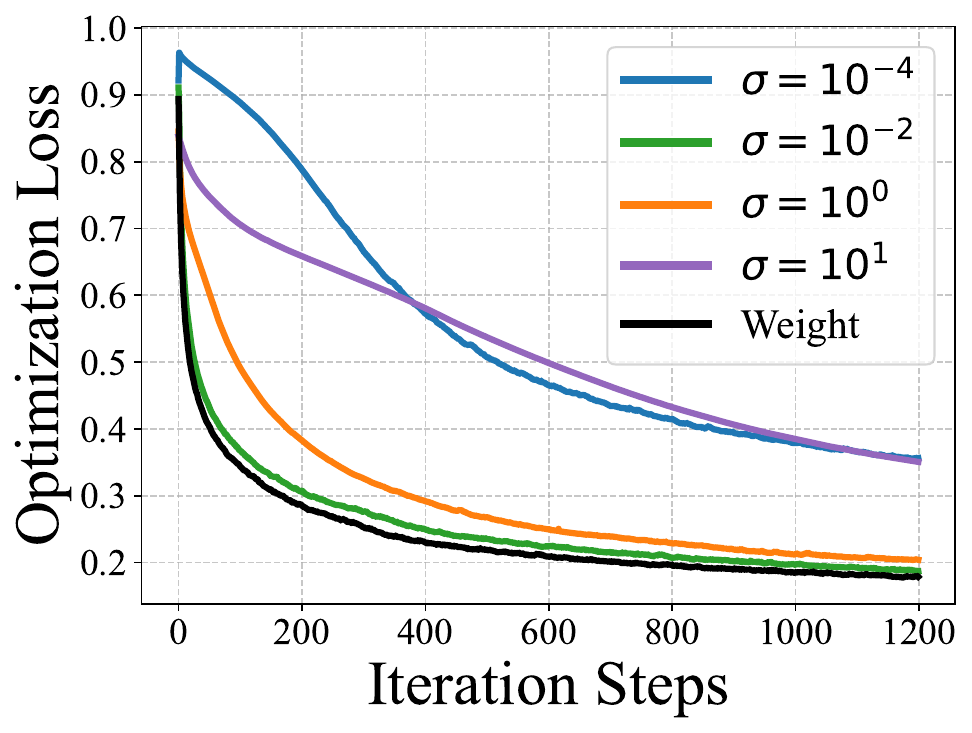}
  \vspace{-2em}
    \caption{\scriptsize Average loss curves.
    }
    \label{fig:optimization_loss}
\vspace{-.5em}
\end{wrapfigure}

\textbf{Observation 1: FDAs evolve into a long-tailed spectrum structure during optimization.} We perform SVD on the unfolded FDA matrices and normalize singular values by the largest one. From the example in Figure~\ref{fig:long-tailed}, the normalized tail singular values decays rapidly in construction. This implies that optimization guides FDAs to allocate less energy to the tail, therefore exhibiting a long-tailed structure. The larger tail energy ($\sigma=10$) results in slower allocation. Furthermore, loss curves ($\mathrm{cos\_dist}$) of different initializations in Figure~\ref{fig:optimization_loss} are consistent with our analysis: the convergence speed first rises and then falls as $\sigma$ decreases from $10^1$ to $10^{-4}$. Notably, initializing FDAs with weights achieves the fastest convergence.

\begin{figure}[h!]
\vspace{-.35em}
\centering
\setlength{\abovecaptionskip}{2pt}
\setlength{\belowcaptionskip}{-7pt}
\includegraphics[width=1.0\textwidth]{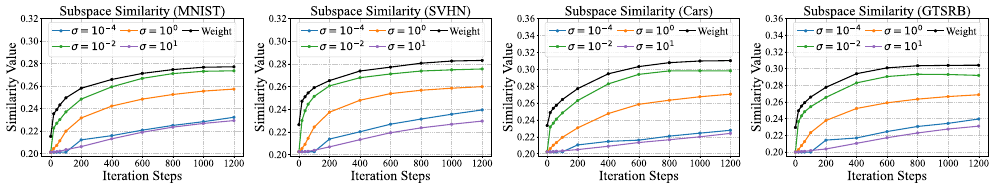}
\vspace{-1.25em}
\caption{\scriptsize Evolution of subspace similarity of FDAs in the FDA construction. We visualize the results of FDAs from the $12$-th layer of the ViT-B/32 checkpoint. $\sigma=10^1$ denotes the FDAs initialized by sampling from $\mathcal{N}(\mathbf{0}, \mI_d)$ and scaling by $10^1$; “Weight” denotes the FDAs initialized from linear weight. FDAs of different initialization schemes tend to align the subspace spanned by real data.}
\label{fig:subspace_sim}
\end{figure}

\begin{figure}[h!]
\centering
\setlength{\abovecaptionskip}{2pt}
\setlength{\belowcaptionskip}{-2pt}
\includegraphics[width=1.0\textwidth]{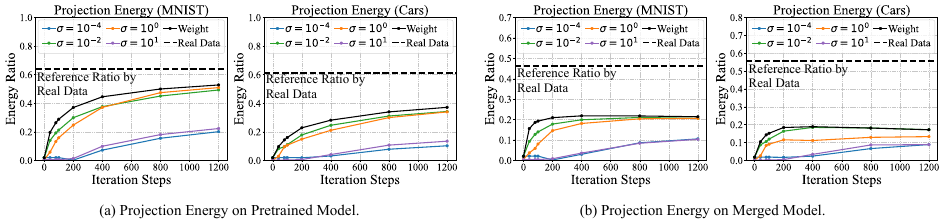}
\vspace{-1.25em}
\caption{\scriptsize Evolution of projection energy ratio on pretrained model and merged model (TA). We visualize the results of FDAs from the $12$-th layer of the ViT-B/32 checkpoints. $\sigma=10^1$ denotes the FDAs initialized by sampling from $\mathcal{N}(\mathbf{0}, \mI_d)$ and scaling by $10^1$; “Weight” denotes the FDAs initialized from linear weight. The dashed line indicates the projection energy ratio of task vector induced by  real data.}
\label{projection_energy}
\end{figure}

\textbf{Observation 2: The high-energy subspace of FDAs gradually aligns with that of real data.}
We adopt the features of real task-specific data as reference and also unfolded them. As both real data \citep{pope2021intrinsic} and FDAs exhibit long-tailed distributions, we measure subspace similarity of top $20\%$ singular vectors via Projection Matrix \citep{fernando2013unsupervised}.
From Figure~\ref{fig:subspace_sim}, the similarity gradually increases as the optimization proceeds, which is consistent across all datasets. An analysis on whether optimization brings FDAs closer to the manifold of real features is in Appendix~\ref{sec:extenion_sec3_appendix}.

\begin{wrapfigure}{r}{0.42\linewidth}
\vspace{-1em}
\centering
\includegraphics[width=1\linewidth]{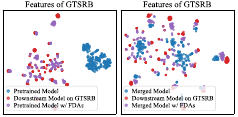}
  \vspace{-2em}
    \caption{\scriptsize Effects of FDAs on the representations.
    }
    \label{fig:tsne_representation_bais}
\vspace{-.5em}
\end{wrapfigure}

\textbf{Observation 3: FDAs-induced adaptation increasingly aligns with that induced by real data.}
We analyze FDAs by re-projecting them into the parameter space, \ie, the adaptation they induce. We repeat the finetuning procedure in \cite{taskarithmetic} to sample parameter update vectors by real data and project the FDAs-induced adaptation onto their non-negative cone. As shown in Figure~\ref{projection_energy}, the projection energy onto the sampled cone steadily increases during optimization, both for the pretrained model and merged model by TA. 
This indicates that the adaptation induced by FDAs is partially consistent with that of real data. Further, following \cite{yang2024representation}, we measure the representation discrepancies on real data and observe that FDAs also effectively mitigate representation bias as shown in Figure~\ref{fig:tsne_representation_bais}.

\vspace{-1.5mm}
\section{Main Experiments and Results}
\label{sec: exp_main_paper}
\vspace{-1.5mm}
In this section, we first introduce the experimental settings for FDAs and then the main results. Due to page limits, the remaining setups and results are presented in the Appendix \ref{sec:exp_appendix}. 

\vspace{-1.5mm}
\subsection{Experimental Settings}
\vspace{-1.5mm}
\textbf{Downstream Models for Merging.} The foundation models in \textit{vision} and \textit{natural language} are both considered. For \textit{vision} tasks, we follow prior works \citep{taskarithmetic,yadav2023ties} and use eight publicly available   domain-specific checkpoints of CLIP Vision Encoder \citep{radford2021learning}. All three backbones, ViT-B/32, ViT-B/16, and ViT-L/14, are considered.
For \textit{natural language} tasks, following previous works \citep{dare, prodistill}, we adopt the downstream checkpoints on GLUE benchmark \citep{wang2018glue} of RoBERTa-Base and RoBERTa-Large \citep{liu2019roberta}.
We further extend FDAs to auto-regressive models, WizardMath-13B \citep{luo2023wizardmath} and LLaMA-2-13B-code-Alpaca\footnote{\url{https://huggingface.co/layoric/llama-2-13b-code-alpaca}}, which are based on LLaMA-2-13B \citep{touvron2023llama2openfoundation}, to validate the effectiveness on large models.

\textbf{Settings for FDAs.} The layer-wise strategy of FDAs are adopted for the above foundation models. For construction, we use Gaussian and weight initialization. $\sigma$ for Gaussian initialization is fixed at $0.01$. For adaptation, two usages (Eq.~\ref{leverage1},~\ref{leverage2}) of FDAs are both considered. For Eq.~\ref{leverage2}, we consider TA \citep{taskarithmetic}, TSV \citep{tsv}, and WUDI \citep{wudi}, where TA serves as a classical baseline, while TSV and WUDI represent recent state-of-the-art approaches.  For auto-regressive models, FDAs are constructed and adapted only in each Resblock’s feed-forward layer.

\textbf{Baseline Methods.}  In addition to the mentioned data-free merging methods, we include baselines that use task-specific data to guide adjustments: RegMean \citep{regmean}, Fisher merging \citep{fisherweighted}, AdaMerging \citep{yadav2023ties}, and ProDistill \citep{prodistill}.
\vspace{-1.5mm}
\subsection{Experiments on Vision and Language Models}
\vspace{-1.5mm}

\begin{table*}[t]
\scriptsize
\centering
\setlength{\tabcolsep}{4.75pt}
\renewcommand{\arraystretch}{1.15}
\setlength{\abovecaptionskip}{2pt}
\setlength{\belowcaptionskip}{0pt}
\vspace{-3mm}
\begin{tabular}{l|cccccccc|cc}
\textbf{Method} &\textbf{SUN397}& \textbf{Cars}& \textbf{RESISC45}& \textbf{EuroSAT}& \textbf{SVHN}& \textbf{GTSRB}& \textbf{MNIST}& \textbf{DTD} &\textbf{Avg} & $\Delta$ \\
\shline
Pretrained & 63.80 & 64.60 & 65.70 & 54.50 & 52.00 & 43.30 & 51.70 & 45.10 & 55.00 & -  \\
Individual & 78.56 & 87.08 & 96.92 & 99.78 & 97.86 & 99.17 & 99.76 & 82.07 & 92.65 & - \\
\hline
RegMean & 70.84 & 75.18 & 83.13 & 94.44 & 90.80 & 82.43 & 98.66 & 60.74 & 82.03& -\\
Fisher merging & 66.78 & 70.49 & 72.17 & 80.19 & 88.33 & 68.14 & 96.60 & 48.46 & 73.89 & -\\
AdaMerging & 64.30 & 74.37 & 74.63 & 94.89 & 91.19 & 94.94 & 97.95 & 69.63 & 82.74 & -\\
Representation Surgery & \textbf{73.60} & 81.50 & 90.40 & 98.50 & 93.20 & 97.40 & 98.90 & \textbf{77.00} & 88.80  & -\\
ProDistill & 72.82 & \textbf{81.94} & \textbf{91.94} & \textbf{99.52} & \textbf{97.11} & \textbf{97.65} & \textbf{99.60} & 70.74 & \textbf{88.92} & -\\  
\hline
TA & 62.07 & 66.14 & 74.00 & 76.48 & 88.02 & 73.79 & 98.52 & 52.50 & 73.94 & -\\
TSV & 72.83 & 80.20 & 88.97 & 97.22 & 93.93 & 93.94 & 99.27 & 72.66 & 87.38  & -\\
WUDI & 75.40 & 81.71 & 90.14 & 98.52 & \textbf{95.30} & \textbf{96.55} & \textbf{99.44} & 73.78 & 88.85 & -\\
\rowcolor{Gray} FDA (Pretrained, Gauss) & 72.54 & 80.62 & 87.75 & 98.44 & 94.31 & 93.43 & 99.38 & 70.11 & 87.07 & {\color{darkspringgreen}+32.07} \\
\rowcolor{Gray} FDA (Pretrained, Weight) & 73.60 & 80.48 & 88.00 & 98.26 & 94.35 & 93.41 & 99.31 & 70.64 & 87.26 & {\color{darkspringgreen}+32.26} \\
\rowcolor{Gray} FDA (TA, Gauss) & 73.72 & 81.42 & 88.63 & 98.37 & 94.61 & 94.44  & 99.39 & 71.54 & 87.77 & {\color{darkspringgreen}+13.83}\\
\rowcolor{Gray} FDA (TA, Weight) & 74.53 & 81.25 & 88.37 & 98.37 & 94.55 & 94.28 & 99.34 & 71.65 & 87.79 & {\color{darkspringgreen}+13.85}\\
\rowcolor{Gray} FDA (TSV, Gauss) & 74.79 & 82.65 & 89.75 & 98.37 & 94.25 & 94.47 & 99.40 & 73.67 & 88.42 & {\color{darkspringgreen}+1.04}\\
\rowcolor{Gray} FDA (TSV, Weight) & 74.93 & 81.92 & 89.79 & 98.33 & 94.10 & 93.78 & 99.36 & 73.78 & 88.25 & {\color{darkspringgreen}+0.87}\\
\rowcolor{Gray} FDA (WUDI, Gauss) & \textbf{76.21} & \textbf{82.84} & 91.03 & \textbf{98.93} & 94.58 & 96.32 & 99.40 & \textbf{74.52} & \textbf{89.23} & {\color{darkspringgreen}+0.38}\\
\rowcolor{Gray} FDA (WUDI, Weight) & 76.15 & 82.75 & \textbf{91.21} & 98.89 & 94.49 & 96.24 & 99.39 & 74.41 & 89.19 & {\color{darkspringgreen} +0.34}\\
\end{tabular}
\caption{\scriptsize Performance of merging ViT-B-16 models across eight downstream vision tasks. The second section (from RegMean to ProDistill) include methods that use task-specific data, and the third section is data-free methods. “FDA (\textit{init model}, \textit{FDA init})” denotes the choice of the initial model and the initialization strategies for FDAs, respectively. ``$\Delta$'' denotes the performance improvement compared to the initial model.}
\label{tab:vitb16}
\end{table*}

\begin{table*}[t]
\scriptsize
\centering
\setlength{\tabcolsep}{5.5pt}
\renewcommand{\arraystretch}{1.15}
\setlength{\abovecaptionskip}{2pt}
\setlength{\belowcaptionskip}{-10pt}
\begin{tabular}{l|cccccccc|cc}
\textbf{Method} & \textbf{CoLA} & \textbf{SST-2} & \textbf{MRPC} & \textbf{STS-B} & \textbf{QQP} & \textbf{MNLI} & \textbf{QNLI} & \textbf{RTE} & \textbf{Avg} & $\Delta$\\
\shline
Pretrained & 0.1679 & 0.4897 & 0.7480 & -0.0471 & 0.3159 & 0.3545 & 0.5054 & 0.4693 & 0.3754 & - \\
Individual & 0.6335 & 0.9001 & 0.9224 & 0.9418 & 0.9055 & 0.8267 & 0.9507 & 0.9222 & 0.8754 & -\\
\hline
RegMean & 0.3449 & 0.8922 & 0.5949 & 0.3509 & 0.8045 & 0.5894 & 0.6132 & 0.6534 & 0.6054 & - \\
Fisher merging & 0.2700 & 0.7856 & 0.7517 & 0.2624 & 0.3159 & 0.4385 & 0.5367 & 0.6426 & 0.5004 & - \\
AdaMerging & 0.1027 & 0.9335 & 0.7480 & \textbf{0.7432} & 0.3159 & 0.7506 & 0.8578 & 0.6245 & 0.6345 & -\\
ProDistill & \textbf{0.4833} & \textbf{0.9427} & \textbf{0.8655} & 0.7310 & \textbf{0.8269} & \textbf{0.8122} & \textbf{0.8825} & \textbf{0.7545} & \textbf{0.7873} & - \\
\hline
TA & 0.1635 & 0.8716 & 0.7480 & 0.6603 & 0.3159 & 0.6101 & 0.8716 & 0.7366 & 0.5918 & -\\
TSV & 0.4791 & 0.9323 & 0.7459 & 0.6660 & 0.3300 & 0.6750 & 0.7761 & 0.6751 & 0.6599 & - \\
WUDI & 0.4201 & 0.9232 & 0.7487 & 0.7345 & 0.5393 & 0.6430 & 0.5746 & 0.5740 & 0.6447 & - \\
\rowcolor{Gray} FDAs (Pretrained, Gauss) & 0.3198 & 0.8463 & 0.7790 & 0.6828 & \textbf{0.7423} & 0.5605 & 0.6021 & \textbf{0.7726} & 0.6632 & {\color{darkspringgreen} +0.2878} \\
\rowcolor{Gray} FDAs (Pretrained, Weight) & 0.3883 & 0.8911 & \textbf{0.7858} & 0.7230 & 0.7410 & 0.5791 & 0.6207 & 0.7329 & 0.6827& {\color{darkspringgreen} +0.3073} \\
\rowcolor{Gray} FDAs (TA, Gauss) & 0.4043 & \textbf{0.9461} & 0.7692 & 0.7897 & 0.6916 & 0.7190 & 0.7487 & 0.7076 & \textbf{0.7220} & {\color{darkspringgreen} +0.1302}\\
\rowcolor{Gray} FDAs (TA, Weight) & 0.4511 & 0.9404 & 0.7578 & 0.7926 & 0.6518 & \textbf{0.7411} & 0.6965 & 0.7148 & 0.7183 & {\color{darkspringgreen} +0.1265}\\
\rowcolor{Gray} FDAs (TSV, Gauss) & \textbf{0.5036} & 0.9438 & 0.7521 & \textbf{0.7975} & 0.4128 & 0.7075 & \textbf{0.8477} & 0.7365 & 0.7127 & {\color{darkspringgreen} +0.0528}\\
\rowcolor{Gray} FDAs (TSV, Weight) & 0.5021 & 0.9427 & 0.7490 & 0.7418 & 0.5062 & 0.7292 & 0.8146 & 0.7365 & 0.7153 & {\color{darkspringgreen} +0.0554 }\\
\rowcolor{Gray} FDAs (WUDI, Gauss) & 0.4841 & 0.9404 & 0.7647 & 0.7645 & 0.6778 & 0.7004 & 0.5911 & 0.6643 & 0.6984 & {\color{darkspringgreen} +0.0537} \\
\rowcolor{Gray} FDAs (WUDI, Weight) & 0.4848 & 0.9392 & 0.7573 & 0.7546 & 0.6979 & 0.7072 & 0.5656 & 0.6643 & 0.6964 & {\color{darkspringgreen} +0.0517}\\
\end{tabular}
\caption{\scriptsize Performance of merging RoBERTa-Large models across eight NLU tasks. The second section (from RegMean to ProDistill) include methods that use task-specific data, and the third section is data-free methods. “FDA (\textit{init model}, \textit{FDA init})” denotes the choice of the initial model and the initialization strategies for FDAs, respectively. ``$\Delta$'' denotes the performance improvement compared to the initial model.}
\label{tab:roberta_large}
\end{table*}

Table \ref{tab:vitb16}, \ref{tab:roberta_large} and \ref{tab:llm} show the results on the ViT-B/16, RoBERTa-Large and auto-aggressive models, respectively. We leave other results in Appendix \ref{sec:exp_appendix}. We made the following observations:

\textbf{FDAs can effectively leverage existing task-specific knowledge for multi-task model merging.} 
Specifically, comparing to the dual framework TA, our framework bring a significant improvement:  the multi-task performance of pretrained model adapted by FDAs achieves $87.26$, compared with $73.94$ of TA, representing an improvement of nearly $18\%$; meanwhile, the average GLUE score achieves $15.4\%$ improvement. Moreover, FDAs also surpass many post-hoc enhancements of vanilla task vectors \citep{uncertainty, yadav2023ties,pcbmerging,awd}, while approaching the performance of current state-of-the-art methods.

\begin{wraptable}{r}{0.48\linewidth}
\scriptsize
\centering
\setlength{\tabcolsep}{4pt} 
\renewcommand{\arraystretch}{1.25} 
\begin{tabular}{l|cccc|c}
\textbf{Method} & \textbf{GSM8K} & \textbf{MATH} & \textbf{MBPP} & \textbf{HEval} & \textbf{Avg} \\
\shline
Individual & 0.634 & 0.147 & 0.282 & 0.226 & 0.322 \\
TA         & 0.560 & 0.111 & 0.082 & 0.085 & 0.209 \\
\rowcolor{Gray} FDAs (TA, W) & \textbf{0.602} & 0.124 & 0.098 & 0.079 & 0.226 \\
\rowcolor{Gray} FDAs (TA, G) & 0.600 & \textbf{0.126} & \textbf{0.100} & \textbf{0.098} & \textbf{0.231} \\
\specialrule{0em}{-7pt}{0pt}
\end{tabular}
\caption{\scriptsize Performance of merging LLama2-13b-Alpaca and WizardMath-13B on Code and Math tasks. “W” denotes the weight initialization; “G” denotes the Gaussian initialization.}
\label{tab:llm}
\end{wraptable}

\textbf{Flexible knowledge modeling}. FDA establishes that projecting task-specific knowledge into the input-representation space uncovers richer task-specific information, enabling more effective model merging. Specifically, although FDAs and data-free parameter-centric methods leverage the same task-specific knowledge, FDAs still improve the merged models by these methods. The average improvement via FDAs on TA, TSV, and WUDI is nearly $5.10\%$ on ViT-B/16, and about $13\%$ on RoBERTa-Large. For the auto-regressive model, as we only adapt for feed-forward network, FDA still achieves $10\%$ improvement on TA. 

\vspace{-1.5mm}
\section{Ablation and Explorative Study}
\label{sec:ablation_study}
\vspace{-1.5mm}
We investigate the impact of different construction choices on the quality of the FDAs. The quality is defined by the average multi-task performance of models from Eq.~\ref{leverage1}, with higher performance indicating better-quality FDAs. Experimental details are provided in Appendix \ref{sec:ablation_appendix}.

\vspace{-1.5mm}
\subsection{Comparison of Initialization Schemes} 
\vspace{-1.5mm}
We evaluate effects of initialization schemes on previous eight ViT-B/32 checkpoints. For Gaussian initialization, we consider: $\sigma=10^1,10^0,10^{-2},10^{-4}$. As shown in Table~\ref{tab:initialization}, initialization significantly affects the quality of FDAs. As $\sigma$ decreases from $10^1$, the performance increases and then decreases, consistent with our analysis. FDAs by weights perform best, aligning with their lowest optimization loss (Figure~\ref{fig:optimization_loss}). Despite a wide range of settings, FDAs consistently outperform TA.

\begin{wraptable}{r}{0.2\linewidth}
\scriptsize
\centering
\renewcommand{\arraystretch}{1.2}
\vspace{-2em}
\begin{tabular}{p{0.45\linewidth}|p{0.2\linewidth}}
\textbf{Init. Scheme} & \textbf{ViT-B/32} \\
\shline
$\sigma=10^1$  & 77.42 \\
$\sigma=10^0$   & 81.78 \\
$\sigma=10^{-2}$& 83.03 \\
$\sigma=10^{-4}$& 71.15 \\
{Weight}        & 83.75 \\
\end{tabular}
\vspace{-1.2em}
\caption{\scriptsize Multi-task performance of FDAs with different initialization schemes.}
\label{tab:initialization}
\vspace{-1em}
\end{wraptable}

\vspace{-1.5mm}
\subsection{The Shape of FDAs}
\label{sec:shapes of FDAs}
\vspace{-1.5mm}

We study the impact of the number of anchors ($\mathrm{anchor\_num}$) and tokens ($\mathrm{token\_num}$) on the quality of FDAs.
We vary $\mathrm{anchor\_num}$ over $\{32,64,128,256\}$ and $\mathrm{token\_num}$ over $\{25,50,75,100\}$ for ViT-B/32 and $\{1,5,10,20\}$ for RoBERTa-Base, evaluating FDAs across their respective datasets. Performances at different adaptation epochs are also reported.
\begin{figure}[!]
\centering
\vspace{-.75em}
\includegraphics[width=1.0\textwidth]{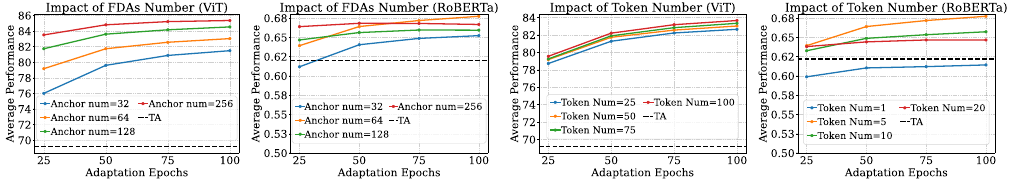}
\vspace{-1.75em}
\caption{\scriptsize Multi-task performance of FDAs with different shape of FDAs on ViT-B/32 and RoBERTa-Base.}
\vspace{-1em}
\label{shape_of_FDAs}
\end{figure}
From Figure~\ref{shape_of_FDAs}, larger FDAs generally lead to better quality, as reflected in the multi-task performance. This is reasonable as larger optimization space makes it easier to reach a lower loss. However, for RoBERTa-Base, the average performance decreases when $\mathrm{token\_num}$ increases from $5$ to $20$. Further related analysis is in Appendix \ref{sec:ablation_appendix}.

\vspace{-1.5mm}
\subsection{The Effect of Distance functions}
\label{sec:ablation_dist}
\vspace{-1.5mm}

\begin{wrapfigure}{r}{0.47\linewidth}
\vspace{-4em}
\centering
\includegraphics[width=1\linewidth]{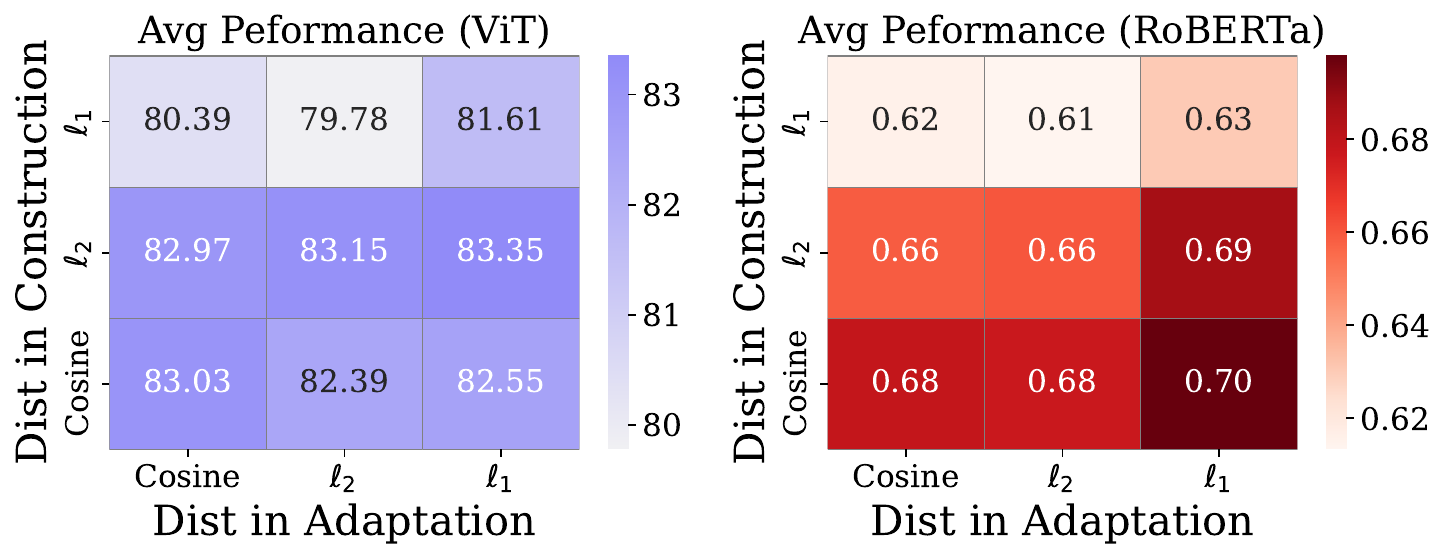}
  \vspace{-1.85em}
    \caption{\scriptsize%
  Multi-task performance of FDAs with different $\mathrm{Dist}$ functions on ViT-B/32 and RoBERTa-Base.
    }
    \vspace{-1em}
    \label{fig:heatmap_of_dist}
\end{wrapfigure}
Distance function $\mathrm{Dist}$ influences both the construction (Eq.~\ref{objective}) and the adaptation (Eq.~\ref{leverage1}, \ref{leverage2}). 
We evaluate three metrics, cosine, $\ell_1$, and $\ell_2$, for their impacts on FDAs. From Figure~\ref{fig:heatmap_of_dist}, $\mathrm{Dist}$ matters more during construction than adaptation. Overall, cosine distance constructs the highest-quality FDAs, $\ell_1$ performs the worst, and our method consistently outperforms TA across all metrics.

\vspace{-1.5mm}
\subsection{Optimization Steps in FDA Construction}
\vspace{-1.5mm}

\begin{wrapfigure}{r}{0.28\linewidth}
\vspace{-5em}
\centering
\includegraphics[width=1\linewidth]{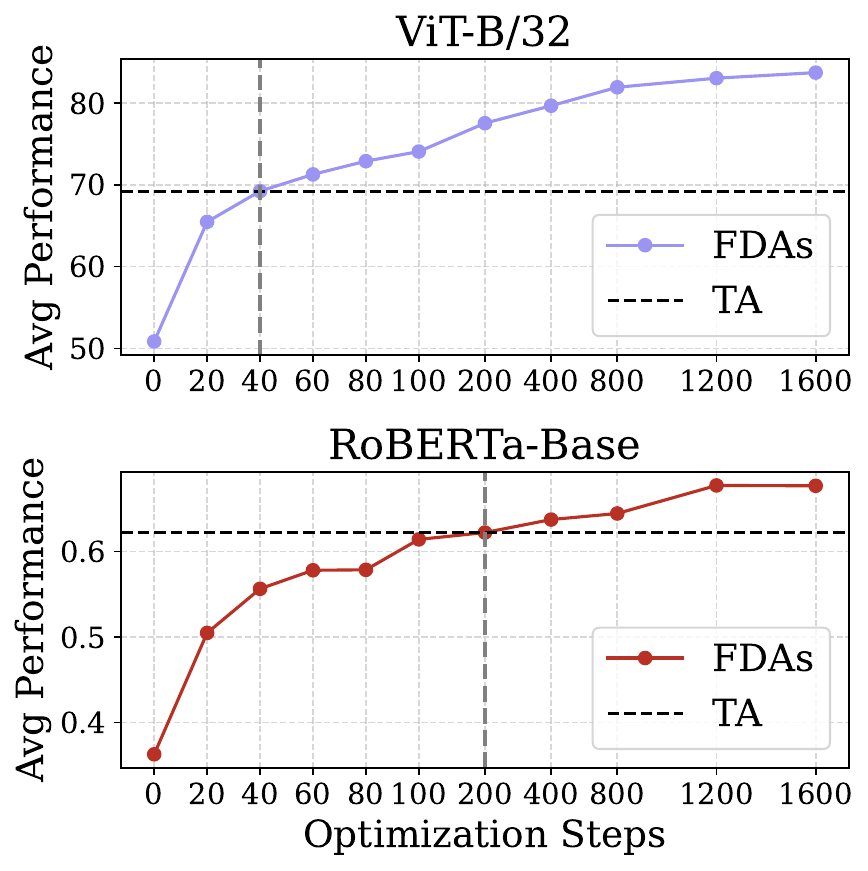}
  \vspace{-2em}
    \caption{\scriptsize%
  Multi-task performance of FDAs with different optimization steps.
    }
    \vspace{-2.5em}
    \label{fig:optimization_steps}
\end{wrapfigure}

We observe the effect of the number of optimization steps on FDAs. From Figure~\ref{fig:optimization_steps}, more steps consistently improve their quality. For ViT-B/32, high-quality FDAs can be obtained from random noise in as few as 40 steps, indicating that our optimization is efficient.

\vspace{-2mm}
\section{Related Work and Concluding Remarks}
\vspace{-2mm}
\textbf{Related work.} Recently, the prevailing paradigm in model merging is the scaled addition of task vectors \citep{taskarithmetic}. This paradigm offers a perspective that knowledge could be transferred through parameters. Motivated by this insight, diverse parameter-centric methods for model merging have emerged. One line of works exploit the structural priors in the parameter space and adjust the task vectors \citep{yadav2023ties,dare,davari2024model,zheng2024free,apgd,awd,tsv,wudi}. In parallel, another line of works tries to introduce the data-driven priors to guide the adjustments \citep{fisherweighted,regmean,yang2023adamerging,yang2024representation,weirepresentation,prodistill,cabs}. The unifying
characteristic of both approaches is their emphasis on modeling the parameter space. 
Instead of modeling in the parameter space, FDAs encode the task-specific knowledge in the input-representation space, which provides an alternative perspective on model merging.

\textbf{Concluding remarks.} This paper introduces a novel input-space-centric model merging framework. The obtained synthetic data (FDAs) models the task-specific knowledge in the parameters through their induced gradient. FDAs can be used independently or alongside existing parameter-centric methods. Experiments demonstrate the effectiveness of FDAs in model merging.

\bibliography{iclr2026_conference}

\begin{thebibliography}{70}
\providecommand{\natexlab}[1]{#1}
\providecommand{\url}[1]{\texttt{#1}}
\expandafter\ifx\csname urlstyle\endcsname\relax
  \providecommand{\doi}[1]{doi: #1}\else
  \providecommand{\doi}{doi: \begingroup \urlstyle{rm}\Url}\fi

\bibitem[Austin et~al.(2021)Austin, Odena, Nye, Bosma, Michalewski, Dohan, Jiang, Cai, Terry, Le, et~al.]{austin2021program}
Jacob Austin, Augustus Odena, Maxwell Nye, Maarten Bosma, Henryk Michalewski, David Dohan, Ellen Jiang, Carrie Cai, Michael Terry, Quoc Le, et~al.
\newblock Program synthesis with large language models.
\newblock \emph{arXiv preprint arXiv:2108.07732}, 2021.

\bibitem[Bentivogli et~al.(2009)Bentivogli, Dagan, Dang, Giampiccolo, and Magnini]{bentivogli2009fifth}
Luisa Bentivogli, Ido Dagan, Hoa~Trang Dang, Danilo Giampiccolo, and Bernardo Magnini.
\newblock The fifth {PASCAL} recognizing textual entailment challenge.
\newblock 2009.

\bibitem[Caron et~al.(2021)Caron, Touvron, Misra, J{\'e}gou, Mairal, Bojanowski, and Joulin]{caron2021emerging}
Mathilde Caron, Hugo Touvron, Ishan Misra, Herv{\'e} J{\'e}gou, Julien Mairal, Piotr Bojanowski, and Armand Joulin.
\newblock Emerging properties in self-supervised vision transformers.
\newblock In \emph{ICCV}, 2021.

\bibitem[Cazenavette et~al.(2022)Cazenavette, Wang, Torralba, Efros, and Zhu]{cazenavette2022dataset}
George Cazenavette, Tongzhou Wang, Antonio Torralba, Alexei~A Efros, and Jun-Yan Zhu.
\newblock Dataset distillation by matching training trajectories.
\newblock In \emph{CVPR}, 2022.

\bibitem[Cer et~al.(2017)Cer, Diab, Agirre, Lopez-Gazpio, and Specia]{cer2017semeval}
Daniel Cer, Mona Diab, Eneko Agirre, Iñigo Lopez-Gazpio, and Lucia Specia.
\newblock Semeval-2017 task 1: Semantic textual similarity — multilingual and cross-lingual focused evaluation.
\newblock In \emph{International Workshop on Semantic Evaluation}, 2017.

\bibitem[Chen et~al.(2021)Chen, Tworek, Jun, Yuan, Pinto, Kaplan, Edwards, Burda, Joseph, Brockman, et~al.]{chen2021evaluating}
Mark Chen, Jerry Tworek, Heewoo Jun, Qiming Yuan, Henrique Ponde De~Oliveira Pinto, Jared Kaplan, Harri Edwards, Yuri Burda, Nicholas Joseph, Greg Brockman, et~al.
\newblock Evaluating large language models trained on code.
\newblock \emph{arXiv preprint arXiv:2107.03374}, 2021.

\bibitem[Cheng et~al.(2017)Cheng, Han, and Lu]{Cheng:2017:RIS}
Gong Cheng, Junwei Han, and Xiaoqiang Lu.
\newblock Remote sensing image scene classification: Benchmark and state of the art.
\newblock \emph{IEEE Geoscience and Remote Sensing Magazine}, 5\penalty0 (4):\penalty0 8--36, 2017.

\bibitem[Cheng et~al.(2025)Cheng, Xiong, Wei, Zhu, and Yuan]{wudi}
Runxi Cheng, Feng Xiong, Yongxian Wei, Wanyun Zhu, and Chun Yuan.
\newblock Whoever started the interference should end it: Guiding data-free model merging via task vectors.
\newblock \emph{arXiv preprint arXiv:2503.08099}, 2025.

\bibitem[Cimpoi et~al.(2014)Cimpoi, Maji, Kokkinos, Lazebnik, and Vedaldi]{Cimpoi:2014:DTW}
Mircea Cimpoi, Subhransu Maji, Ioannis Kokkinos, Svetlana Lazebnik, and Andrea Vedaldi.
\newblock Describing textures in the wild.
\newblock In \emph{Proceedings of the IEEE Conference on Computer Vision and Pattern Recognition (CVPR)}, pp.\  3606--3613, 2014.

\bibitem[Clark et~al.(2019)Clark, Khandelwal, Levy, and Manning]{clark2019does}
Kevin Clark, Urvashi Khandelwal, Omer Levy, and Christopher~D Manning.
\newblock What does bert look at? an analysis of bert's attention.
\newblock \emph{arXiv preprint arXiv:1906.04341}, 2019.

\bibitem[Cobbe et~al.(2021)Cobbe, Kosaraju, Bavarian, Chen, Jun, Kaiser, Plappert, Tworek, Hilton, Nakano, et~al.]{cobbe2021trainingverifierssolvemath}
Karl Cobbe, Vineet Kosaraju, Mohammad Bavarian, Mark Chen, Heewoo Jun, Lukasz Kaiser, Matthias Plappert, Jerry Tworek, Jacob Hilton, Reiichiro Nakano, et~al.
\newblock Training verifiers to solve math word problems.
\newblock \emph{arXiv preprint arXiv:2110.14168}, 2021.

\bibitem[Daheim et~al.(2023)Daheim, M{\"o}llenhoff, Ponti, Gurevych, and Khan]{uncertainty}
Nico Daheim, Thomas M{\"o}llenhoff, Edoardo~Maria Ponti, Iryna Gurevych, and Mohammad~Emtiyaz Khan.
\newblock Model merging by uncertainty-based gradient matching.
\newblock \emph{arXiv preprint arXiv:2310.12808}, 2023.

\bibitem[Davari \& Belilovsky(2024)Davari and Belilovsky]{davari2024model}
MohammadReza Davari and Eugene Belilovsky.
\newblock Model breadcrumbs: Scaling multi-task model merging with sparse masks.
\newblock In \emph{ECCV}, 2024.

\bibitem[Dolan \& Brockett(2005)Dolan and Brockett]{dolan2005automatically}
William~B. Dolan and Chris Brockett.
\newblock Automatically constructing a corpus of sentential paraphrases.
\newblock In \emph{Proceedings of the International Workshop on Paraphrasing}, 2005.

\bibitem[Dosovitskiy et~al.(2020)Dosovitskiy, Beyer, Kolesnikov, Weissenborn, Zhai, Unterthiner, Dehghani, Minderer, Heigold, Gelly, et~al.]{dosovitskiy2020image}
Alexey Dosovitskiy, Lucas Beyer, Alexander Kolesnikov, Dirk Weissenborn, Xiaohua Zhai, Thomas Unterthiner, Mostafa Dehghani, Matthias Minderer, Georg Heigold, Sylvain Gelly, et~al.
\newblock An image is worth 16x16 words: Transformers for image recognition at scale.
\newblock \emph{arXiv preprint arXiv:2010.11929}, 2020.

\bibitem[Du et~al.(2024)Du, Lee, Li, Jiang, Guo, Yu, Liu, Goh, Tang, He, et~al.]{pcbmerging}
Guodong Du, Junlin Lee, Jing Li, Runhua Jiang, Yifei Guo, Shuyang Yu, Hanting Liu, Sim~K Goh, Ho-Kin Tang, Daojing He, et~al.
\newblock Parameter competition balancing for model merging.
\newblock In \emph{NeurIPS}, 2024.

\bibitem[Fernando et~al.(2013)Fernando, Habrard, Sebban, and Tuytelaars]{fernando2013unsupervised}
Basura Fernando, Amaury Habrard, Marc Sebban, and Tinne Tuytelaars.
\newblock Unsupervised visual domain adaptation using subspace alignment.
\newblock In \emph{ICCV}, 2013.

\bibitem[Gargiulo et~al.(2025)Gargiulo, Crisostomi, Bucarelli, Scardapane, Silvestri, and Rodola]{tsv}
Antonio~Andrea Gargiulo, Donato Crisostomi, Maria~Sofia Bucarelli, Simone Scardapane, Fabrizio Silvestri, and Emanuele Rodola.
\newblock Task singular vectors: Reducing task interference in model merging.
\newblock In \emph{CVPR}, 2025.

\bibitem[Giampiccolo et~al.(2007)Giampiccolo, Magnini, Dagan, and Dolan]{giampiccolo2007third}
Danilo Giampiccolo, Bernardo Magnini, Ido Dagan, and Bill Dolan.
\newblock The third {PASCAL} recognizing textual entailment challenge.
\newblock In \emph{Proceedings of the ACL-PASCAL Workshop on Textual Entailment and Paraphrasing}, 2007.

\bibitem[Glorot \& Bengio(2010)Glorot and Bengio]{xavier}
Xavier Glorot and Yoshua Bengio.
\newblock Understanding the difficulty of training deep feedforward neural networks.
\newblock In \emph{AISTATS}, 2010.

\bibitem[Gur-Ari et~al.(2018)Gur-Ari, Roberts, and Dyer]{gurari2018gradientdescenthappenstiny}
Guy Gur-Ari, Daniel~A Roberts, and Ethan Dyer.
\newblock Gradient descent happens in a tiny subspace.
\newblock \emph{arXiv preprint arXiv:1812.04754}, 2018.

\bibitem[Haim et~al.(2006)Haim, Dagan, Dolan, Ferro, Giampiccolo, Magnini, and Szpektor]{bar2006second}
R~Bar Haim, Ido Dagan, Bill Dolan, Lisa Ferro, Danilo Giampiccolo, Bernardo Magnini, and Idan Szpektor.
\newblock The second pascal recognising textual entailment challenge.
\newblock In \emph{Proceedings of the Second PASCAL Challenges Workshop on Recognising Textual Entailment}, volume~7, pp.\  785--794, 2006.

\bibitem[He et~al.(2015)He, Zhang, Ren, and Sun]{he2015delvingdeeprectifierssurpassing}
Kaiming He, Xiangyu Zhang, Shaoqing Ren, and Jian Sun.
\newblock Delving deep into rectifiers: Surpassing human-level performance on imagenet classification.
\newblock In \emph{ICCV}, 2015.

\bibitem[Helber et~al.(2019)Helber, Bischke, Dengel, and Borth]{Helber:2019:ENS}
Patrick Helber, Benjamin Bischke, Andreas Dengel, and Damian Borth.
\newblock Eurosat: A novel dataset and deep learning benchmark for land use and land cover classification.
\newblock \emph{arXiv preprint arXiv:1709.00029}, 2019.

\bibitem[Hendrycks et~al.(2020)Hendrycks, Burns, Basart, Zou, Mazeika, Song, and Steinhardt]{hendrycks2020measuring}
Dan Hendrycks, Collin Burns, Steven Basart, Andy Zou, Mantas Mazeika, Dawn Song, and Jacob Steinhardt.
\newblock Measuring massive multitask language understanding.
\newblock \emph{arXiv preprint arXiv:2009.03300}, 2020.

\bibitem[Hu et~al.(2022)Hu, Shen, Wallis, Allen-Zhu, Li, Wang, Wang, Chen, et~al.]{hu2021loralowrankadaptationlarge}
Edward~J Hu, Yelong Shen, Phillip Wallis, Zeyuan Allen-Zhu, Yuanzhi Li, Shean Wang, Lu~Wang, Weizhu Chen, et~al.
\newblock Lora: Low-rank adaptation of large language models.
\newblock In \emph{ICLR}, 2022.

\bibitem[Ilharco et~al.(2022)Ilharco, Ribeiro, Wortsman, Gururangan, Schmidt, Hajishirzi, and Farhadi]{taskarithmetic}
Gabriel Ilharco, Marco~Tulio Ribeiro, Mitchell Wortsman, Suchin Gururangan, Ludwig Schmidt, Hannaneh Hajishirzi, and Ali Farhadi.
\newblock Editing models with task arithmetic.
\newblock \emph{arXiv preprint arXiv:2212.04089}, 2022.

\bibitem[Iyer et~al.(2017)Iyer, Dandekar, and Csernai]{iyer2017first}
Shankar Iyer, Nikhil Dandekar, and Korn{\'e}l Csernai.
\newblock First quora dataset release: Question pairs.
\newblock \url{https://data.quora.com/First-Quora-Dataset-Release-Question-Pairs}, 2017.

\bibitem[Jin et~al.(2022)Jin, Ren, Preotiuc-Pietro, and Cheng]{regmean}
Xisen Jin, Xiang Ren, Daniel Preotiuc-Pietro, and Pengxiang Cheng.
\newblock Dataless knowledge fusion by merging weights of language models.
\newblock \emph{arXiv preprint arXiv:2212.09849}, 2022.

\bibitem[Krause et~al.(2013)Krause, Deng, Stark, and Fei-Fei]{Krause:2013:CLD}
Jonathan Krause, Jia Deng, Michael Stark, and Li~Fei-Fei.
\newblock Collecting a large-scale dataset of fine-grained cars.
\newblock \emph{arXiv preprint arXiv:1312.2234}, 2013.

\bibitem[LeCun et~al.(1998)LeCun, Cortes, and Burges]{LeCun:1998:GBL}
Yann LeCun, Corinna Cortes, and Christopher~J.C. Burges.
\newblock Gradient-based learning applied to document recognition.
\newblock \emph{Proceedings of the IEEE}, 86\penalty0 (11):\penalty0 2278--2324, 1998.

\bibitem[Liu et~al.(2017{\natexlab{a}})Liu, Dai, Humayun, Tay, Yu, Smith, Rehg, and Song]{liu2017iterative}
Weiyang Liu, Bo~Dai, Ahmad Humayun, Charlene Tay, Chen Yu, Linda~B Smith, James~M Rehg, and Le~Song.
\newblock Iterative machine teaching.
\newblock In \emph{ICML}, 2017{\natexlab{a}}.

\bibitem[Liu et~al.(2017{\natexlab{b}})Liu, Zhang, Li, Yu, Dai, Zhao, and Song]{liu2017deep}
Weiyang Liu, Yan-Ming Zhang, Xingguo Li, Zhiding Yu, Bo~Dai, Tuo Zhao, and Le~Song.
\newblock Deep hyperspherical learning.
\newblock In \emph{NeurIPS}, 2017{\natexlab{b}}.

\bibitem[Liu et~al.(2018)Liu, Liu, Yu, Dai, Lin, Wang, Rehg, and Song]{liu2018decoupled}
Weiyang Liu, Zhen Liu, Zhiding Yu, Bo~Dai, Rongmei Lin, Yisen Wang, James~M Rehg, and Le~Song.
\newblock Decoupled networks.
\newblock In \emph{CVPR}, 2018.

\bibitem[Liu et~al.(2019)Liu, Ott, Goyal, Du, Joshi, Chen, Levy, Lewis, Zettlemoyer, and Stoyanov]{liu2019roberta}
Yinhan Liu, Myle Ott, Naman Goyal, Jingfei Du, Mandar Joshi, Danqi Chen, Omer Levy, Mike Lewis, Luke Zettlemoyer, and Veselin Stoyanov.
\newblock Roberta: A robustly optimized bert pretraining approach.
\newblock \emph{arXiv preprint arXiv:1907.11692}, 2019.

\bibitem[Loshchilov \& Hutter(2017)Loshchilov and Hutter]{loshchilov2019decoupledweightdecayregularization}
Ilya Loshchilov and Frank Hutter.
\newblock Decoupled weight decay regularization.
\newblock \emph{arXiv preprint arXiv:1711.05101}, 2017.

\bibitem[Luo et~al.(2023)Luo, Sun, Xu, Zhao, Lou, Tao, Geng, Lin, Chen, and Zhang]{luo2023wizardmath}
Haipeng Luo, Qingfeng Sun, Can Xu, Pu~Zhao, Jianguang Lou, Chongyang Tao, Xiubo Geng, Qingwei Lin, Shifeng Chen, and Dongmei Zhang.
\newblock Wizardmath: Empowering mathematical reasoning for large language models via reinforced evol-instruct.
\newblock \emph{arXiv preprint arXiv:2308.09583}, 2023.

\bibitem[Matena \& Raffel(2022)Matena and Raffel]{fisherweighted}
Michael~S Matena and Colin~A Raffel.
\newblock Merging models with fisher-weighted averaging.
\newblock In \emph{NeurIPS}, 2022.

\bibitem[Netzer et~al.(2011)Netzer, Wang, Coates, Bissacco, Wu, Ng, et~al.]{Yuval:2011:SVH}
Yuval Netzer, Tao Wang, Adam Coates, Alessandro Bissacco, Baolin Wu, Andrew~Y Ng, et~al.
\newblock Reading digits in natural images with unsupervised feature learning.
\newblock In \emph{NIPS workshop on deep learning and unsupervised feature learning}, 2011.

\bibitem[Pope et~al.(2021)Pope, Zhu, Abdelkader, Goldblum, and Goldstein]{pope2021intrinsic}
Phillip Pope, Chen Zhu, Ahmed Abdelkader, Micah Goldblum, and Tom Goldstein.
\newblock The intrinsic dimension of images and its impact on learning.
\newblock \emph{arXiv preprint arXiv:2104.08894}, 2021.

\bibitem[Qiu et~al.(2023)Qiu, Liu, Xiao, Liu, Bhatt, Luo, Weller, and Sch{\"o}lkopf]{qiu2023iterative}
Zeju Qiu, Weiyang Liu, Tim~Z Xiao, Zhen Liu, Umang Bhatt, Yucen Luo, Adrian Weller, and Bernhard Sch{\"o}lkopf.
\newblock Iterative teaching by data hallucination.
\newblock In \emph{AISTATS}, 2023.

\bibitem[Radford et~al.(2021)Radford, Kim, Hallacy, Ramesh, Goh, Agarwal, Sastry, Askell, Mishkin, Clark, et~al.]{radford2021learning}
Alec Radford, Jong~Wook Kim, Chris Hallacy, Aditya Ramesh, Gabriel Goh, Sandhini Agarwal, Girish Sastry, Amanda Askell, Pamela Mishkin, Jack Clark, et~al.
\newblock Learning transferable visual models from natural language supervision.
\newblock In \emph{ICML}, 2021.

\bibitem[Raghu et~al.(2021)Raghu, Unterthiner, Kornblith, Zhang, and Dosovitskiy]{raghu2021vision}
Maithra Raghu, Thomas Unterthiner, Simon Kornblith, Chiyuan Zhang, and Alexey Dosovitskiy.
\newblock Do vision transformers see like convolutional neural networks?
\newblock In \emph{NeurIPS}, 2021.

\bibitem[Rajpurkar et~al.(2016)Rajpurkar, Zhang, Lopyrev, and Liang]{rajpurkar2016squad}
Pranav Rajpurkar, Jian Zhang, Konstantin Lopyrev, and Percy Liang.
\newblock Squad: 100,000+ questions for machine comprehension of text.
\newblock In \emph{EMNLP}, 2016.

\bibitem[Ruder(2017)]{ruder2017overview}
Sebastian Ruder.
\newblock An overview of multi-task learning in deep neural networks.
\newblock \emph{arXiv preprint arXiv:1706.05098}, 2017.

\bibitem[Shin et~al.(2017)Shin, Lee, Kim, and Kim]{shin2017continual}
Hanul Shin, Jung~Kwon Lee, Jaehong Kim, and Jiwon Kim.
\newblock Continual learning with deep generative replay.
\newblock In \emph{NeurIPS}, 2017.

\bibitem[Socher et~al.(2013)Socher, Perelygin, Wu, Chuang, Manning, Ng, and Potts]{socher2013recursive}
Richard Socher, Alex Perelygin, Jean Wu, Jason Chuang, Christopher~D Manning, Andrew Ng, and Christopher Potts.
\newblock Recursive deep models for semantic compositionality over a sentiment treebank.
\newblock In \emph{EMNLP}, 2013.

\bibitem[Stallkamp et~al.(2011)Stallkamp, Schlipsing, Salmen, and Igel]{Stallkamp:2011:GTS}
Johannes Stallkamp, Marc Schlipsing, Jens Salmen, and Christian Igel.
\newblock The german traffic sign recognition benchmark: A multi-class classification competition.
\newblock \emph{International Journal of Computer Vision}, 94\penalty0 (3):\penalty0 137--150, 2011.

\bibitem[Touvron et~al.(2023)Touvron, Martin, Stone, Albert, Almahairi, Babaei, Bashlykov, Batra, Bhargava, Bhosale, et~al.]{touvron2023llama2openfoundation}
Hugo Touvron, Louis Martin, Kevin Stone, Peter Albert, Amjad Almahairi, Yasmine Babaei, Nikolay Bashlykov, Soumya Batra, Prajjwal Bhargava, Shruti Bhosale, et~al.
\newblock Llama 2: Open foundation and fine-tuned chat models.
\newblock \emph{arXiv preprint arXiv:2307.09288}, 2023.

\bibitem[Vaswani et~al.(2017)Vaswani, Shazeer, Parmar, Uszkoreit, Jones, Gomez, Kaiser, and Polosukhin]{vaswani2017attention}
Ashish Vaswani, Noam Shazeer, Niki Parmar, Jakob Uszkoreit, Llion Jones, Aidan~N Gomez, {\L}ukasz Kaiser, and Illia Polosukhin.
\newblock Attention is all you need.
\newblock In \emph{NeurIPS}, 2017.

\bibitem[Wang et~al.(2018{\natexlab{a}})Wang, Singh, Michael, Hill, Levy, and Bowman]{wang2018glue}
Alex Wang, Amanpreet Singh, Julian Michael, Felix Hill, Omer Levy, and Samuel~R Bowman.
\newblock Glue: A multi-task benchmark and analysis platform for natural language understanding.
\newblock \emph{arXiv preprint arXiv:1804.07461}, 2018{\natexlab{a}}.

\bibitem[Wang et~al.(2024)Wang, Zhang, Su, and Zhu]{wang2024comprehensive}
Liyuan Wang, Xingxing Zhang, Hang Su, and Jun Zhu.
\newblock A comprehensive survey of continual learning: Theory, method and application.
\newblock \emph{IEEE transactions on pattern analysis and machine intelligence}, 46\penalty0 (8):\penalty0 5362--5383, 2024.

\bibitem[Wang et~al.(2018{\natexlab{b}})Wang, Zhu, Torralba, and Efros]{wang2018dataset}
Tongzhou Wang, Jun-Yan Zhu, Antonio Torralba, and Alexei~A Efros.
\newblock Dataset distillation.
\newblock \emph{arXiv preprint arXiv:1811.10959}, 2018{\natexlab{b}}.

\bibitem[Warstadt et~al.(2018)Warstadt, Singh, and Bowman]{warstadt2018neural}
Alex Warstadt, Amanpreet Singh, and Samuel~R. Bowman.
\newblock Neural network acceptability judgments.
\newblock \emph{arXiv preprint arXiv:1805.12471}, 2018.

\bibitem[Wei et~al.(2025{\natexlab{a}})Wei, He, Yang, Liu, Wang, Feng, and An]{weirepresentation}
Qi~Wei, Shuo He, Enneng Yang, Tingcong Liu, Haobo Wang, Lei Feng, and Bo~An.
\newblock Representation surgery in model merging with probabilistic modeling.
\newblock In \emph{ICML}, 2025{\natexlab{a}}.

\bibitem[Wei et~al.(2025{\natexlab{b}})Wei, Tang, Shen, Hu, Yuan, and Cao]{apgd}
Yongxian Wei, Anke Tang, Li~Shen, Zixuan Hu, Chun Yuan, and Xiaochun Cao.
\newblock Modeling multi-task model merging as adaptive projective gradient descent.
\newblock \emph{arXiv preprint arXiv:2501.01230}, 2025{\natexlab{b}}.

\bibitem[Williams et~al.(2017)Williams, Nangia, and Bowman]{williams2018broadcoveragechallengecorpussentence}
Adina Williams, Nikita Nangia, and Samuel~R Bowman.
\newblock A broad-coverage challenge corpus for sentence understanding through inference.
\newblock \emph{arXiv preprint arXiv:1704.05426}, 2017.

\bibitem[Xiao et~al.(2016)Xiao, Ehinger, Hays, Torralba, and Oliva]{Xiao:2016:SDE}
Jianxiong Xiao, Krista~A. Ehinger, James Hays, Antonio Torralba, and Aude Oliva.
\newblock Sun database: Exploring a large collection of scene categories.
\newblock \emph{International Journal of Computer Vision}, 119\penalty0 (1):\penalty0 3--22, 2016.

\bibitem[Xiong et~al.(2024)Xiong, Cheng, Chen, Zhang, Guo, Yuan, and Xu]{awd}
Feng Xiong, Runxi Cheng, Wang Chen, Zhanqiu Zhang, Yiwen Guo, Chun Yuan, and Ruifeng Xu.
\newblock Multi-task model merging via adaptive weight disentanglement.
\newblock \emph{arXiv preprint arXiv:2411.18729}, 2024.

\bibitem[Xu et~al.(2025)Xu, Li, and Zhang]{prodistill}
Jing Xu, Jiazheng Li, and Jingzhao Zhang.
\newblock Scalable model merging with progressive layer-wise distillation.
\newblock \emph{arXiv preprint arXiv:2502.12706}, 2025.

\bibitem[Yadav et~al.(2023)Yadav, Tam, Choshen, Raffel, and Bansal]{yadav2023ties}
Prateek Yadav, Derek Tam, Leshem Choshen, Colin~A Raffel, and Mohit Bansal.
\newblock Ties-merging: Resolving interference when merging models.
\newblock \emph{Advances in Neural Information Processing Systems}, 36:\penalty0 7093--7115, 2023.

\bibitem[Yang et~al.(2023)Yang, Wang, Shen, Liu, Guo, Wang, and Tao]{yang2023adamerging}
Enneng Yang, Zhenyi Wang, Li~Shen, Shiwei Liu, Guibing Guo, Xingwei Wang, and Dacheng Tao.
\newblock Adamerging: Adaptive model merging for multi-task learning.
\newblock \emph{arXiv preprint arXiv:2310.02575}, 2023.

\bibitem[Yang et~al.(2024)Yang, Shen, Wang, Guo, Chen, Wang, and Tao]{yang2024representation}
Enneng Yang, Li~Shen, Zhenyi Wang, Guibing Guo, Xiaojun Chen, Xingwei Wang, and Dacheng Tao.
\newblock Representation surgery for multi-task model merging.
\newblock \emph{arXiv preprint arXiv:2402.02705}, 2024.

\bibitem[Yang et~al.(2025)Yang, Qi, Sun, Long, Zhao, and Gao]{cabs}
Zongzhen Yang, Binhang Qi, Hailong Sun, Wenrui Long, Ruobing Zhao, and Xiang Gao.
\newblock Cabs: Conflict-aware and balanced sparsification for enhancing model merging.
\newblock \emph{arXiv preprint arXiv:2503.01874}, 2025.

\bibitem[Yu et~al.(2024)Yu, Yu, Yu, Huang, and Li]{dare}
Le~Yu, Bowen Yu, Haiyang Yu, Fei Huang, and Yongbin Li.
\newblock Language models are super mario: Absorbing abilities from homologous models as a free lunch.
\newblock In \emph{ICML}, 2024.

\bibitem[Yu et~al.(2023)Yu, Hu, Hong, Liu, Weller, and Liu]{yu2023continual}
Longhui Yu, Tianyang Hu, Lanqing Hong, Zhen Liu, Adrian Weller, and Weiyang Liu.
\newblock Continual learning by modeling intra-class variation.
\newblock \emph{Transactions on Machine Learning Research}, 2023.

\bibitem[Zhang et~al.(2025)Zhang, Liu, and Chen]{zhang2025loraoneonestepgradientsuffice}
Yuanhe Zhang, Fanghui Liu, and Yudong Chen.
\newblock Lora-one: One-step full gradient could suffice for fine-tuning large language models, provably and efficiently.
\newblock \emph{arXiv preprint arXiv:2502.01235}, 2025.

\bibitem[Zhao \& Bilen(2023)Zhao and Bilen]{zhao2023dataset}
Bo~Zhao and Hakan Bilen.
\newblock Dataset condensation with distribution matching.
\newblock In \emph{WACV}, 2023.

\bibitem[Zhao et~al.(2021)Zhao, Mopuri, and Bilen]{zhao2021dataset}
Bo~Zhao, Konda~Reddy Mopuri, and Hakan Bilen.
\newblock Dataset condensation with gradient matching.
\newblock In \emph{ICLR}, 2021.

\bibitem[Zheng \& Wang(2024)Zheng and Wang]{zheng2024free}
Shenghe Zheng and Hongzhi Wang.
\newblock Free-merging: Fourier transform for efficient model merging.
\newblock \emph{arXiv preprint arXiv:2411.16815}, 2024.

\end{thebibliography}
\bibliographystyle{iclr2026_conference}

\newpage
\appendix
\addcontentsline{toc}{section}{Appendix} %
\renewcommand \thepart{} %
\renewcommand \partname{}
\part{\vspace{-5mm}\Large{\centerline{Appendix}}}
\parttoc

\newpage
\section{Derivation for \eqref{iteration}}
\label{derivation_dynamics}
We first recall the settings. Given a linear encoder $\varphi$, \ie, $\vy=\mW \vx$ with $\mW\in \mathbb{R}^{d\times d}, \vx \in \mathbb{R}^d$, the corresponding pretrained parameter and the downstream parameter on the $i$-th task are denoted by $\mW_0$ and $\mW_i$, respectively. Assuming that $\mathrm{Dist}(\mW_0 \vx,\mW_i \vx)=\frac{1}{2}\|\mW_0 \vx-\mW_i \vx\|_2^2$, we derive the iteration formula via gradient descent.
\begin{proof}
The objective function can be written as follows:
\begin{equation}
    \min_{\vx}\; \mathrm{cos\_dist}\!\left(
    \left.\nabla_{\mW}\tfrac{1}{2}\big\|\mW \vx - \mW_i \vx\big\|_2^2 \right|_{\mW=\mW_0},\; \mW_i - \mW_0
    \right).
    \label{linear_objective}
\end{equation}
Let \(\eta>0\) be the step size and \(t\in\{0,1,2,\dots\}\) the iteration index. The gradient-descent update is
\begin{equation}
    \vx_{t+1}
    = \vx_t - \eta\,\nabla_{\vx_t}\,\mathrm{cos\_dist}\!\left(
    \left.\nabla_{\mW}\tfrac{1}{2}\big\|\mW\vx_t - \mW_i \vx_t\big\|_2^2 \right|_{\mW=\mW_0},\; \mW_i - \mW_0
    \right).
    \label{iteration_in_linear}
\end{equation}
Since \(\mathrm{cos\_dist}(A,B)=1-\frac{\langle A,B\rangle_F}{\|A\|_F\|B\|_F}\) with \(\langle A,B\rangle_F:=\mathrm{tr}(A^\top B)\), we rewrite \eqref{iteration_in_linear} as
\begin{equation}
    \vx_{t+1}=\vx_t+\eta\,\Delta_t,\qquad
    \Delta_t:=\nabla_{\vx_t}\,
    \frac{\left\langle
    \left.\nabla_{\mW}\tfrac{1}{2}\big\|\mW\vx_t - \mW_i \vx_t\big\|_2^2 \right|_{\mW=\mW_0},\; \mW_i - \mW_0
    \right\rangle_F}{
    \left\|\left.\nabla_{\mW}\tfrac{1}{2}\big\|\mW\vx_t - \mW_i \vx_t\big\|_2^2 \right|_{\mW=\mW_0}\right\|_F\;\|\mW_i-\mW_0\|_F}.
    \label{eq:Delta_def}
\end{equation}

\textbf{Step 1: Computing the \(\mW\)-gradient.}
For the \(j\)-th row \(\mW_{j,:}\),
\begin{align}
    \left.\nabla_{\mW_{j,:}} \tfrac{1}{2}\big\|\mW\vx_t - \mW_i \vx_t\big\|_2^2 \right|_{\mW=\mW_0}
    &= \big(\mW_0 \vx_t - \mW_i \vx_t\big)_{j}\; \vx_t^\top .
\end{align}
Stacking rows gives
\begin{equation}
    \left.\nabla_{\mW}\tfrac{1}{2}\big\|\mW\vx_t - \mW_i \vx_t\big\|_2^2 \right|_{\mW=\mW_0}
    = (\mW_0 - \mW_i)\,\vx_t \vx_t^\top
    = -\,\Delta \mW\,\vx_t \vx_t^\top,
    \qquad \Delta \mW:=\mW_i-\mW_0 .
    \label{eq:Wgrad_delta}
\end{equation}

\textbf{Step 2: Plugging \eqref{eq:Wgrad_delta} into \(\Delta_t\).}
The numerator in \eqref{eq:Delta_def} becomes
\begin{align}
    \left\langle -\Delta \mW\,\vx_t \vx_t^\top,\; \Delta \mW \right\rangle_F
    &= -\,\mathrm{tr}\!\big(\vx_t \vx_t^\top \Delta \mW^\top \Delta \mW\big)
     = -\, \vx_t^\top \Delta \mW^\top \Delta \mW\, \vx_t
     = -\,\|\Delta \mW \vx_t\|_2^2 .
\end{align}
The denominator equals
\begin{align}
    \left\|-\Delta \mW\,\vx_t \vx_t^\top\right\|_F\,\|\Delta \mW\|_F
    = \|\Delta \mW \vx_t\|_2\,\|\vx_t\|_2\,\|\Delta \mW\|_F .
\end{align}
Hence the scalar inside the gradient is
\begin{equation}
    -\,\frac{\|\Delta \mW \vx_t\|_2}{\|\vx_t\|_2\,\|\Delta \mW\|_F}.
    \label{eq:scalar_simplified_delta}
\end{equation}
Therefore,
\begin{equation}
    \Delta_t
    = \nabla_{\vx_t}\!\left(
      -\,\frac{\|\Delta \mW \vx_t\|_2}{\|\vx_t\|_2\,\|\Delta \mW\|_F}
    \right).
\end{equation}

\textbf{Step 3: Evaluating \(\Delta_t\) (assume \(\vx_t\neq 0\) and \(\Delta \mW \vx_t\neq 0\)).}
Using \(\nabla_\vu\|A\vu\|_2=\frac{A^\top A\,\vu}{\|A\vu\|_2}\) and \(\nabla_\vu\|\vu\|_2=\frac{\vu}{\|\vu\|_2}\),
\begin{align}
    \Delta_t
    &= -\frac{1}{\|\Delta \mW\|_F}
       \left[
         \frac{\Delta \mW^\top \Delta \mW\,\vx_t}{\|\Delta \mW \vx_t\|_2 \,\|\vx_t\|_2}
         - \frac{\|\Delta \mW \vx_t\|_2}{\|\vx_t\|_2^{3}}\,\vx_t
       \right].
\end{align}

\textbf{Step 4: Iteration in affine form.}
Write
\begin{equation}
    \Delta_t = \sigma_t\,\vx_t + \beta_t\,\Delta \mW^\top \Delta \mW\,\vx_t,
\end{equation}
where
\[
\sigma_t=\frac{\|\Delta \mW \vx_t\|_2}{\|\Delta \mW\|_F\,\|\vx_t\|_2^3}\;>\;0,
\qquad
\beta_t=-\,\frac{1}{\|\Delta \mW\|_F\,\|\Delta \mW \vx_t\|_2\,\|\vx_t\|_2}\;<\;0.
\]
Hence
\begin{equation}
    \vx_{t+1}=\vx_t+\eta \Delta_t
    =(1+\eta \sigma_t)\,\vx_t+\eta \beta_t \,\Delta \mW^\top \Delta \mW\,\vx_t .
\end{equation}
Note that $\eta \sigma_t$ generally is a small positive number. Thus, for better analysis, we approximate the iteration as
\begin{equation}
    \vx_{t+1}=\vx_t+\eta \Delta_t
    =\,\vx_t+\eta \beta_t \,\Delta \mW^\top \Delta \mW\,\vx_t.
\end{equation}
\end{proof}

\section{Proof for Proposition \ref{prop:spectral}}
\label{proof_for_prop_spectral}
\begin{proof}[Proof of Proposition \ref{prop:spectral}]
We start from the iteration in \eqref{iteration}:
\[
\vx_{t+1} = \vx_t + \eta \beta_t \,\Delta \mW^\top \Delta \mW \, \vx_t,
\quad \Delta \mW = \mW_i - \mW_0, \quad t = 0,\dots,T-1.
\]
And we have that:
$\Delta \mW^\top \Delta \mW = \mU \boldsymbol{\Lambda} \mU^\top,$
where $\mU = [\vu_1, \dots, \vu_d] \in \mathbb{R}^{d \times d}$ is orthogonal and $\boldsymbol{\Lambda} = \mathrm{diag}(\lambda_1, \dots, \lambda_d)$ with $\lambda_1 > \cdots > \lambda_d > 0$.

\textbf{Step 1: Project $\vx_t$ onto the eigenbasis.}
Let
\[
\vx_t = \sum_{i=1}^d c_t^i \, \vu_i,
\]
where $c_t^i = \vu_i^\top \vx_t$. Then
\[
\Delta \mW^\top \Delta \mW \, \vx_t
= \Delta \mW^\top \Delta \mW \sum_{i=1}^d c_t^i \, \vu_i
= \sum_{i=1}^d c_t^i \, \lambda_i \, \vu_i.
\]

\textbf{Step 2: Plugging the projection into \eqref{iteration}.}
\[
\vx_{t+1} = \vx_t + \eta \beta_t \sum_{i=1}^d c_t^i \lambda_i \vu_i
= \sum_{i=1}^d c_t^i \vu_i + \sum_{i=1}^d (\eta \beta_t \lambda_i c_t^i) \vu_i
= \sum_{i=1}^d c_t^i (1 + \eta \beta_t \lambda_i) \vu_i.
\]

Define $\gamma_t = - \eta \beta_t > 0$, then
$c_{t+1}^i = c_t^i (1 - \gamma_t \lambda_i), \quad t = 0, \dots, T-1.$

\textbf{Step 3: Recursion.}
By recursion, we can get the formula in proposition \ref{prop:spectral}:
\[c_t^i = c_0^i \prod_{j=0}^{t-1} (1 - \gamma_j \lambda_i), \quad i=1,\dots,d.\]
\end{proof}

\newpage
\section{Experiment Details}
\label{sec:exp_appendix}
\subsection{Details of downstream models for Merging}
For \textit{vision} task, we follow the setup in previous works \citep{taskarithmetic,yadav2023ties}. Specifically, we adopt eight public, domain-specific foundation models from \cite{taskarithmetic}, which obtained by finetuning the pretrained Vision Encoder of CLIP \citep{radford2021learning} on the following datasets: SUN397 \citep{Xiao:2016:SDE}, Cars \citep{Krause:2013:CLD}, RESISC45 \citep{Cheng:2017:RIS}, EuroSAT \citep{Helber:2019:ENS}, SVHN \citep{Yuval:2011:SVH}, GTSRB \citep{Stallkamp:2011:GTS}, MNIST \citep{LeCun:1998:GBL} and DTD \citep{Cimpoi:2014:DTW}.
All sizes of these models, i.e., ViT-B/32, ViT-B/16 and ViT-L/14, are taken into consideration.

For natural language processing task, we also follow previous works \citep{dare,prodistill}. Specifically, we consider the downstream models of eight datasets from the GLUE benchmark \citep{wang2018glue}, including CoLA~\citep{warstadt2018neural}, SST-2~\citep{socher2013recursive}, MRPC~\citep{dolan2005automatically}, STS-B~\citep{cer2017semeval}, QQP~\citep{iyer2017first}, MNLI~\citep{williams2018broadcoveragechallengecorpussentence}, QNLI~\citep{wang2018glue, rajpurkar2016squad}, RTE~\citep{wang2018glue,bar2006second,giampiccolo2007third,bentivogli2009fifth}. To obtain the downstream models, we follow the finetuning procedure of \cite{dare} on publicly available pretrained RoBERTa-Base and RoBERTa-Large.

For auto-regressive models, we follow the practice in \cite{dare} and consider two expert models: the Math expert model WizardMath-13B \citep{luo2023wizardmath} and the Code expert model LLaMA-2-13B-Code-Alpaca.
We use four datasets for evaluation, including GSM8K \citep{cobbe2021trainingverifierssolvemath}, MATH \citep{hendrycks2020measuring}, HumanEval \citep{chen2021evaluating} and MBPP \citep{austin2021program}. For evaluation, we adopt the evaluation codes of \cite{prodistill}.

\subsection{Details of Baseline methods}
For data-based baselines (\ie, RegMean \citep{regmean}, Fisher merging \citep{fisherweighted}, AdaMerging \citep{yang2023adamerging}, and ProDistill \citep{prodistill}), we follow the released implementations. For ViT, we adopt the results reported in \cite{prodistill} as they rely on the same public checkpoints. For LLM, we simply set the coefficient of TA method as $0.5$, which is adopted in \cite{prodistill}. For data-free baseline methods (\ie, TA \citep{taskarithmetic}, TSV \citep{tsv}, and WUDI \citep{wudi}), we use their publicly available open-source code. We try our best to ensure fair and strong baselines.

\subsection{Details of FDAs}
\label{sec:detail_of_FDAs_appendix}
All FDAs in our experiments follows the layer-wise manner. We keep the settings of the construction and adaptation consistent across layers. Both Gaussian and parameter initializations are considered. For Gaussian initialization, we set $\sigma=0.01$. Both initializations share the same settings.

\textbf{FDA Construction.} For ViT and RoBERTa, we first set the number $n$ of anchors as $64$ for each task. Then, the $\mathrm{token\_num}$ of FDAs for ViT follows the default settings: $50$ for ViT-B/32, $197$ for ViT-B/16, and $257$ for ViT-L/14. For RoBERTa-Base and RoBERTa-Large, we fix the $\mathrm{token\_num}$ as $5$. To optimize these anchors, we adopt the AdamW optimizer \citep{loshchilov2019decoupledweightdecayregularization}, iterating for $1200$ steps with a learning rate of $1e-2$. For WizardMATH and llama-2-13b-alpaca, we construct FDAs for feed-forword networks. Thus, we only set the number $n$ of anchors as $8192$. We also adopt AdamW optimizer, iterating for $200$ steps with a learning rate of $1e-2$. All the above optimizations are performed with a full batch size. 

\begin{table*}[t!]
\scriptsize
\centering
\setlength{\tabcolsep}{5.75pt}
\renewcommand{\arraystretch}{1.2}
\setlength{\abovecaptionskip}{3pt}
\setlength{\belowcaptionskip}{-5pt}
\begin{tabular}{l|cccccccc|cc}
\textbf{Method} &\textbf{SUN397}& \textbf{Cars}& \textbf{RESISC45}& \textbf{EuroSAT}& \textbf{SVHN}& \textbf{GTSRB}& \textbf{MNIST}& \textbf{DTD} &\textbf{Avg}  \\
\shline
{Individual}  & 75.34 & 77.73 & 95.98 & 99.89 & 97.46 & 98.73 & 99.69 & 79.36 & 90.52 \\
\hline
RegMean& 67.47 & 66.63 & 81.75 & 93.33 & 86.68 & 79.92 & 97.30 & 60.16 & 79.15 \\
Fisher merging & 63.95 & 63.84 & 66.86 & 83.48 & 79.54 & 60.11 & 91.27 & 49.36 & 69.80 \\
AdaMerging& 63.69 & 65.74 & 77.65 & 91.00 & 82.48 & 93.12 & 98.27 & 62.29 & 79.28 \\ 
Representation Surgery & 71.20 & 72.00 & 92.30 & 99.00 & 92.20 & 97.90 & 99.00 & 76.10 & 87.50 \\
ProDistill & 68.90 & 71.21 & 89.89 & 99.37 & 96.13 & 95.29 & 99.46 & 68.03 & 86.04 \\
\hline
TA & 55.16 & 54.98 & 66.68 & 78.89 & 80.21 & 69.68 & 97.34 & 50.37 & 69.16 \\
TSV-M & 69.08 & 70.92 & 85.67 & 95.15 & 92.02 & 91.93 & 99.25 & 69.20 & 84.15 \\
WUDI &  70.92 & 71.38 & 85.68 & 96.33 &94.27 & 94.51 & 99.47 & 69.47 & 85.26 \\\rowcolor{Gray}
FDAs (Pretrained, Gauss) & 67.46 & 69.05 & 81.87 & 96.89 & 94.02 & 89.58 & 99.28 & 66.12 & 83.03  \\\rowcolor{Gray}
FDAs (Pretrained, Weight) & 68.12 & 70.46 & 83.94 & 97.07 & 94.08 & 90.03 & 99.33 & 66.97 & 83.75 \\\rowcolor{Gray}
FDAs (TA, Gauss) & 69.48 & 71.43 & 83.79 & 96.89 & 94.43 & 91.20 & 99.36 & 68.67 & 84.41 \\\rowcolor{Gray}
FDAs (TA, Weight) & 69.61 & 71.83 & 85.27 & 97.00 & 94.33 & 91.59 & 99.39 & 69.10 & 84.76\\\rowcolor{Gray}
FDA (TSV, Gauss) & 71.17 & 73.25 & 86.46 & 91.19 & 94.25 & 92.03 & 99.39 & 70.64 & 85.55 \\\rowcolor{Gray}
FDA (TSV, Weight) & 71.23 & 73.71 & 86.76 & 97.19 & 94.11& 91.79 & 99.44 & 70.74 & 85.62  \\\rowcolor{Gray}
FDA (WUDI, Gauss) & 72.71 & 73.71 & 86.97 & 96.67 & 94.20 & 93.99 & 99.42 & 70.32 & 86.00 \\\rowcolor{Gray}
FDA (WUDI, Weight)& 72.82 & 73.88 & 87.02 & 96.70 & 94.13 & 93.76 & 99.40 & 70.59 & 86.04 \\
\end{tabular}
\caption{\scriptsize Performance of merging ViT-B/32 models across eight downstream vision tasks. “FDA (\textit{init model}, \textit{FDA init})” denotes the choice of the initial model and the initialization strategies for FDAs, respectively. “Pretrained” denotes the initial model is the pretrained model.}
\label{tab:vitb32}
\end{table*}
\begin{table*}[t!]
\scriptsize
\centering
\setlength{\tabcolsep}{5.75pt}
\renewcommand{\arraystretch}{1.2}
\setlength{\abovecaptionskip}{3pt}
\setlength{\belowcaptionskip}{-5pt}
\begin{tabular}{l|cccccccc|c}
\textbf{Method} &\textbf{SUN397}& \textbf{Cars}& \textbf{RESISC45}& \textbf{EuroSAT}& \textbf{SVHN}& \textbf{GTSRB}& \textbf{MNIST}& \textbf{DTD} &\textbf{Avg}  \\
\shline
Individual & 82.32 & 92.35 & 97.38 & 99.78 & 98.11 & 99.24 & 99.69 & 84.15 & 94.13 \\
\hline
RegMean & 74.04 & 87.22 & 88.52 & 98.15 & 92.89 & 90.22 & 99.27 & 69.84 & 87.52\\
Fisher merging &71.28 & 85.18 & 81.59 & 89.67 & 81.51 & 83.39 & 96.31 & 65.48 & 81.80 \\
AdaMerging & 75.96 & 89.42 & 90.08 & 96.59 & 91.78 & 97.52 & 98.91 & 77.61 & 89.73 \\ 
Representation Surgery & 80.30 & 90.80 & 94.30 & 98.20 & 94.10 & 98.70 & 99.20 & 82.50 & 92.30 \\
ProDistill & 77.73 & 90.04 & 94.43 & 99.48 & 97.71 & 98.26 & 99.63 & 78.24 & 91.94 \\
\hline
TA & 74.16 & 82.09 & 86.67 & 94.07 & 87.91 & 86.77 & 98.94 & 65.69 & 84.54 \\
TSV & 79.00 & 89.99 & 93.95 & 99.15 & 95.34 & 96.16 & 99.51 & 79.10 & 91.52 \\
WUDI & 81.15 & 90.95 & 94.00 & 99.33 & 96.21 & 98.04 & 99.63 & 80.64 & 92.49 \\
\rowcolor{Gray} FDAs (Pretrained, Gauss) & 77.59 & 90.05 & 92.75 & 99.04 & 95.42 & 96.78 & 99.56 & 76.76 & 90.99 \\
\rowcolor{Gray} FDAs (Pretrained, Weight) & 77.91 & 90.14 & 92.84 & 99.04 & 95.44 & 96.56 & 99.59 & 76.86 & 91.05 \\
\rowcolor{Gray} FDAs (TA, Gauss) & 78.96 & 90.41 & 93.13 & 99.07 & 95.63 & 97.15 & 99.58 & 77.23 & 91.40\\
\rowcolor{Gray} FDAs (TA, Weight) & 78.92 & 90.35 & 93.19 & 99.11 & 95.53 & 96.83 & 99.61 & 77.13 & 91.33 \\
\rowcolor{Gray} FDAs (TSV, Gauss) & 79.84 & 90.66 &93.95 & 99.19 & 95.81 & 97.35 & 99.61 & 79.04 & 91.93 \\
\rowcolor{Gray} FDAs (TSV, Weight) & 79.69 & 90.60 &93.68 & 99.19 & 95.51 & 97.05 &99.60 & 78.40 & 91.71 \\
\rowcolor{Gray} FDAs (WUDI, Gauss) & 81.09 & 91.16 & 94.41 & 99.26 & 96.21 & 97.86 & 99.68 & 80.37 & 92.51  \\
\rowcolor{Gray} FDAs (WUDI, Weight) & 81.07 & 91.27 & 94.48 & 99.26 & 96.11 & 97.72 & 99.67 & 80.05 & 92.45 \\
\end{tabular}
\caption{\scriptsize Performance of merging ViT-L/14 models across eight downstream vision tasks. “FDA (\textit{init model}, \textit{FDA init})” denotes the choice of the initial model and the initialization strategies for FDAs, respectively. “Pretrained” denotes the initial model is the pretrained model.}
\label{tab:vitl14}
\end{table*}
\begin{table*}[t!]
\scriptsize
\centering
\setlength{\tabcolsep}{7pt}
\renewcommand{\arraystretch}{1.2}
\setlength{\abovecaptionskip}{3pt}
\setlength{\belowcaptionskip}{-10pt}
\begin{tabular}{l|cccccccc|c}
\textbf{Method} & \textbf{CoLA} & \textbf{SST-2} & \textbf{MRPC} & \textbf{STS-B} & \textbf{QQP} & \textbf{MNLI} & \textbf{QNLI} & \textbf{RTE} & \textbf{Avg} \\
\shline
Individual & 0.626 & 0.9427 & 0.8946 & 0.9070 & 0.8986 & 0.8721 & 0.9257 & 0.7581 & 0.8531 \\
\hline
RegMean & 0.2078 & 0.9266 & 0.8215 & 0.5350 & 0.8141 & 0.7551 & 0.8541 & 0.7256 & 0.7050 \\
Fisher merging & 0.123 & 0.8188 & 0.7598 & -0.1194 & 0.7319 & 0.6056 & 0.507 & 0.4874 & 0.4893 \\
AdaMerging & 0.0864 & 0.8968 & 0.795 & 0.398 & 0.7936 & 0.7579 & 0.7128 & 0.7076 & 0.6435 \\
ProDistill & 0.4968 & 0.9209 & 0.8340 & 0.6623 & 0.8044 & 0.7987 & 0.8918 & 0.7148 & 0.7655 \\
\hline
TA & 0.0257 & 0.9048 & 0.7916 & 0.2873 & 0.8169 & 0.7437 & 0.7216 & 0.7220 & 0.6267 \\
TSV & 0.0722 & 0.9014 & 0.806 & 0.3081 & 0.8365 & 0.8031 & 0.7893 & 0.7401 & 0.6571 \\
WUDI & 0.1459 & 0.922 & 0.7925 & 0.3832 & 0.8393 & 0.7917 & 0.7972 & 0.7292 & 0.6751 \\
\rowcolor{Gray} FDAs (Pretrained, Gauss) & 0.2178 & 0.9232 & 0.8144 & 0.4256 & 0.8019 & 0.7065 & 0.7928 & 0.7365 & 0.6773 \\
\rowcolor{Gray} FDAs (Pretrained, Weight) & 0.2229 & 0.9209 & 0.8057 & 0.2291 & 0.8117 & 0.6871 & 0.8294 & 0.7329 & 0.6550 \\
\rowcolor{Gray} FDAs (TA, Gauss) & 0.2304 & 0.9174 & 0.8124 & 0.6029 & 0.7763 & 0.7679 & 0.7366 & 0.6968 & 0.6926 \\
\rowcolor{Gray} FDAs (TA, Weight) & 0.2119 & 0.9083 & 0.8215 & 0.5445 & 0.7929 & 0.75 & 0.7424 & 0.7112 & 0.6853 \\
\rowcolor{Gray} FDAs (TSV, Gauss) & 0.1923 & 0.8865 & 0.7962 & 0.4612 & 0.8064 & 0.7796 & 0.7695 & 0.6895 & 0.6727 \\
\rowcolor{Gray} FDAs (TSV, Weight) & 0.2604 & 0.8979 & 0.8200 & 0.2487 & 0.8231 & 0.7930 & 0.8052 & 0.7256 & 0.6717 \\
\rowcolor{Gray} FDAs (WUDI, Gauss) & 0.2231 & 0.9278 & 0.7838 & 0.3999 & 0.8218 & 0.8015 & 0.7926 & 0.7292 & 0.6850 \\
\rowcolor{Gray} FDAs (WUDI, Weight) & 0.2872 & 0.9289 & 0.8108 & 0.3450 & 0.8263 & 0.7992 & 0.8038 & 0.7292 & 0.6913 \\
\end{tabular}
\caption{\scriptsize Performance of merging RoBERTa-Base models across eight NLU tasks.}
\label{tab:roberta}
\end{table*}

\textbf{Parameter Optimization.}  
We adopt the Adam optimizer to optimize parameters. There are three hyperparameters: learning rate, batch size and optimization epochs. For FDAs from different initialization schemes, we adopt the same settings.
When the initial model is initialized by pretrained parameter, we adopt Eq.~\ref{leverage1}. We use a batch size of $128$ for all models. For all ViT models, we set the learning rate to $1\text{e-}5$ and train for $100$ epochs. For all RoBERTa, we use a learning rate of $5\text{e-}5$, training for $100$ epochs on the base model and $50$ epochs on the large model.
When the initial model is initialized by task-vector-based merging method, we adopt Eq.~\ref{leverage2}. We follow previous works \citep{yang2023adamerging,prodistill} and use a large learning rate $1e-2$. Generally, for ViT and RoBERTa, we use a batch size of $128$, also training for $100$ epochs ($15$ epochs for RoBERTa-Large). For the initial ViT model by WUDI, we set the batch size as $512$ and train for $25$ epochs. For the auto-regressive model, we use a batch size of $8192$, training for $50$ epochs.
\subsection{Remaining Results.}
We present the experimental results on ViT-B/32, ViT-B/L-14 and RoBERTa-Base on Table \ref{tab:vitb32}, \ref{tab:vitl14} and \ref{tab:roberta}, respectively. FDAs bring slight improvements on the WUDI-initialized merged model. Please note that WUDI-initialzed model has already achieves $98.3\%$ of the performance of eight individual models. That means that this initialization is already situated in a well-optimized local minima. Generally, the remaining results are consistent with the observations in our main paper.

\subsection{A practical method for choosing the scaling coefficient}
\label{sec:practical_coefficient}
As discussed in the main paper, the scaling  coefficient $\sigma$ is crucial for the convergence. 
We provide a practical heuristic to choose $\sigma$. Specifically, we fix one layer with the same initial FDAs and evaluate a set of candidate $\sigma$
values by comparing their alignment after a fixed number of iterations, selecting the $\sigma$ that yields the best alignment as the scaling coefficient.

\newpage
\section{More Results and Disucssions for Section~\ref{discussions}}
\label{sec:extenion_sec3_appendix}
In this section, we describe the details in the investigation for the knowledge in FDAs, and then present further examples. 

\textbf{Details of Observation 1.} We first follow the same construction procedure in Section~\ref{sec:detail_of_FDAs_appendix} and obtain sets of FDAs $\{\vx^{(l)}_{ij}\}_{j=1}^{64},\vx_{ij} \in \mathbb{R}^{50 \times 768}, i=1,\dots, 8; l=1,\dots,12$. Then we unfold each set into the matrix $\mX^{(l)}_i \in \mathbb{R}^{3200\times 768}$, treating each token embedding as a representation unit \citep{clark2019does,dosovitskiy2020image,raghu2021vision}. We conduct the singular value decomposition for $\mX^{(l)}_i$ and obtain the sorted singular values: $\lambda^{(l)}_{i,1},\dots, \lambda^{(l)}_{i,768}$. The normalized singular values $\tilde{\lambda}^{(l)}_{ij}$
are computed as $\tilde{\lambda}^{(l)}_{ij}=\lambda^{(l)}_{ij}/\lambda^{(l)}_{i,1}$. We visualize more results in Figure~\ref{fig:long-tailed_appendix}.

\begin{figure}[h!]
\begin{center}
\includegraphics[width=1.0\textwidth]{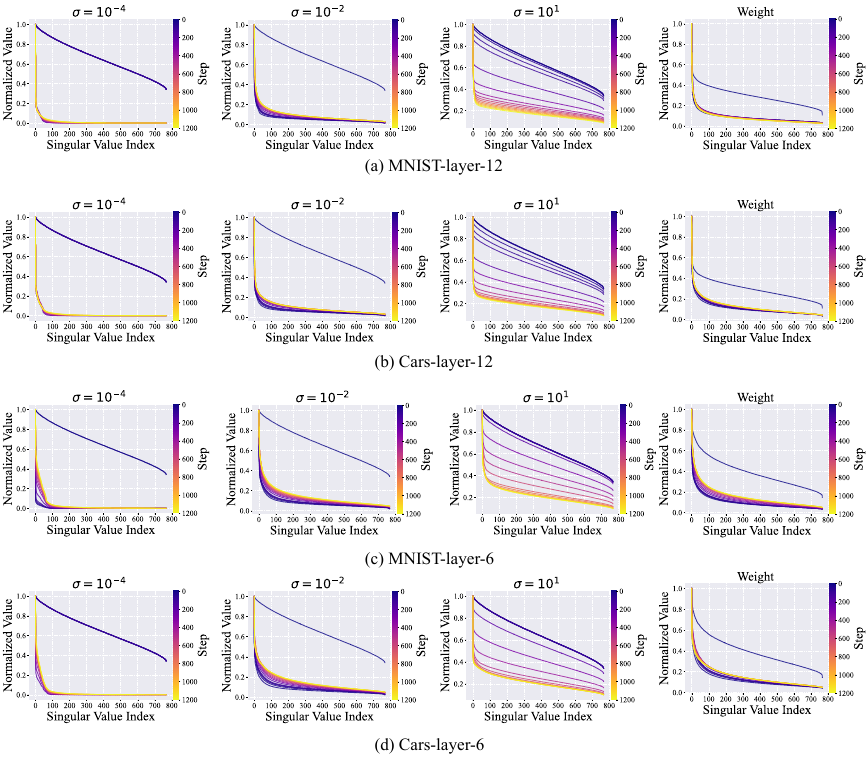}
\end{center}
\vspace{-1em}
\caption{\scriptsize Evolution of Normalized singular values of FDAs in the FDA construction. $\sigma=10^1$ denotes FDAs initialized by sampling from $\mathcal{N}(\boldsymbol{0},\mI_d)$ and scaling by $10^1$; “Weight” denotes that of FDAs initialized from linear weight. FDAs of different initialization schemes tend to evolve into long-tailed structures.}
\label{fig:long-tailed_appendix}
\end{figure}

\textbf{Details of Observation 2.} For the features of real task-specific data in the $l$-th layer, we attach hooks at the corresponding layers of the downstream checkpoints to extract features.  $64$ real examples are randomly sampled from validation dataset for each task. We unfold them into the matrices denoted as $\hat{\mX}^{l}_{i},i=1,\dots,8; l=1,\dots,12$. Then we perform the projection matrix method to analyze the similarity of subspaces spanned by the top $20\%$ singular vectors. Assume that $\mU^{(l)}_i$ and $\hat{\mU}^{(l)}_i$ are spanned by their top $20\%$ singular vectors, the similarity is measured by: 
\begin{equation}
\mathrm{Sim} = \frac{\mathrm{tr}(\mP^{l}_i \hat{\mP}^{l}_i)}{768 \times 20\%},
\notag
\end{equation}
where $\mP^{l}_i , \hat{\mP}^{l}_i$ are the projection matrices computed by
$\mP_i^{(l)} = \mU_i^{(l)} (\mU_i^{(l)})^\top, \quad
\hat{\mP}_i^{(l)} = (\hat{\mU_i^{(l)}}) (\hat{\mU_i^{(l)}})^\top$. We present more results in Figure~\ref{fig:subspace_sim_appendix}.

\begin{figure}[t]
\vspace{-1em}
\begin{center}
\includegraphics[width=1.0\textwidth]{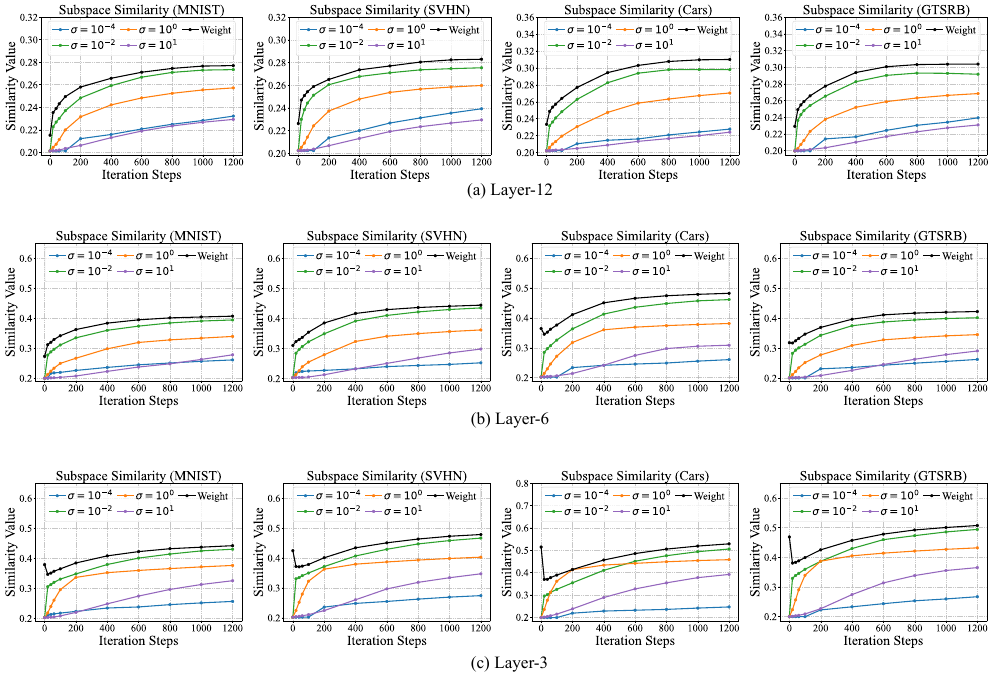}
\end{center}
\vspace{-1em}
\caption{\scriptsize Evolution of subspace similarity of FDAs in the FDA construction. $\sigma=10^1$ denotes the FDAs initialized by sampling from $\mathcal{N}(\boldsymbol{0},\mI_d)$ and scaling by $10^1$; “Weight” denotes the FDAs initialized from linear weight. FDAs of different initialization schemes tend to align the subspace spanned by real data.}
\label{fig:subspace_sim_appendix}
\vspace{2mm}
\end{figure}

Moreover, we adopt the t-SNE visualization to observe whether the optimization process drives FDAs closer to the manifold of real features. As shown in Figure~\ref{fda_vs_real_data_paper}, there is no clear evidence that optimization process makes the initial anchors closer to the manifold of real data.
\begin{figure}[!]
\begin{center}
\includegraphics[width=1.0\textwidth]{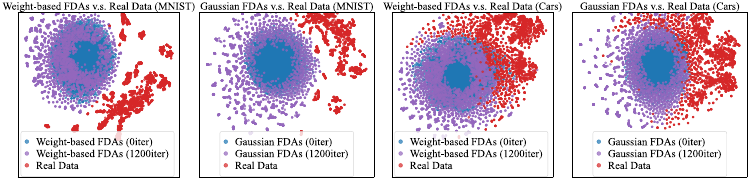}
\end{center}
\vspace{-1em}
\caption{\scriptsize  The low-dimensional visualization via t-SNE of FDAs and real data.}
\label{fda_vs_real_data_paper}
\end{figure}

\textbf{Details of Observation 3.} To acquire the parameter update vectors, we follow the finetuning procedure in \cite{taskarithmetic} and sample the parameter update vectors from consecutive batches. The fine-tuning procedure is performed starting from both the pretrained model and the merged model obtained by TA.  This yields two sets of updated task vectors: one initialized from the pretrained model and the other from the merged model. Instead of completing full fine-tuning, we sample $512$ vectors per task and then stop. These sampled vectors are used to span the corresponding cones. We denote the sampled vectors for $i$-th task as $\Delta \vw_{i,1},\dots ,\Delta \vw_{i,512} \in \mathbb{R}^p$. We use the $\boldsymbol{\tau}'_i\in \mathbb{R}^p$ to denote the adaptation direction induced by FDAs. Then, we solve the following non-negative least square problem to obtain the projection energy ratio:
\begin{equation}
\footnotesize
\!\mathrm{Ratio}_i 
= \frac{\left\| [\Delta \vw_{i,1}, \dots, \Delta \vw_{i,512}] \, \boldsymbol{\alpha}_i \right\|_F}
{\|\boldsymbol{\tau}_i\|_F}, \quad
\text{where } 
\boldsymbol{\alpha}_i 
= \arg\!\!\min_{\substack{\alpha_{ij} \ge 0 \\ j=1,\dots,512}} 
\left\| \boldsymbol{\tau}_i\!- \![\Delta \vw_{i,1}, \dots, \Delta \vw_{i,512}] \, \boldsymbol{\alpha}_i \right\|_F^2.
\notag
\end{equation}

\begin{figure}[t]
\begin{center}
\includegraphics[width=1.0\textwidth]{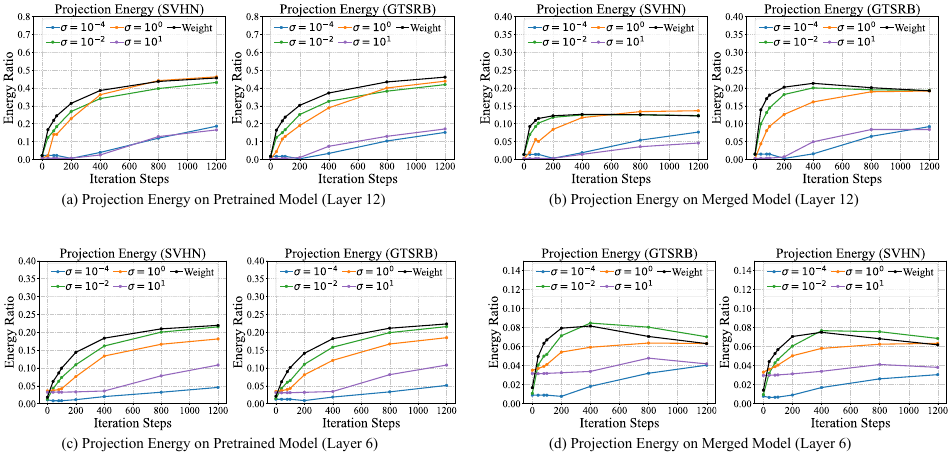}
\end{center}
\vspace{-1em}
\caption{\scriptsize Evolution of projection energy ratio on pretrained model and merged model (TA). $\sigma=10^1$ denotes the FDAs initialized by sampling from $\mathcal{N}(\boldsymbol{0},\mI_d)$ and scaling by $10^1$; “Weight” denotes the FDAs initialized from linear weight.}
\label{fig:projection_energy_appendix}
\vspace{2mm}
\end{figure}

Note that solving the above optimization problem, even for a single layer, is nearly intractable. Therefore, we compute it separately for the attention block and the MLP layer, and then report the averaged energy. We present more visualization of projection energy in Figure~\ref{fig:projection_energy_appendix} and the effects on the representation in Figure~\ref{fig:representation_bias_appendix}. Although the improvement in shallow-layer projection energy is not significant during the optimization, FDAs still effectively alleviate the overall representation bias. 

\begin{figure}[!]
\begin{center}
\includegraphics[width=1.0\textwidth]{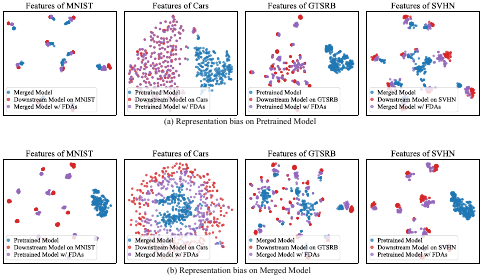}
\end{center}
\vspace{-1em}
\caption{\scriptsize The Effect of FDAs in the representation. In general, the adaptation with FDAs substantially mitigates the representation bias observed in both pretrained and merged models.}
\label{fig:representation_bias_appendix}
\end{figure}

In the paper, the task vectors obtained from real data are also projected onto these cones. We take the projection energy ratio as the reference. Specifically, for the pretrained model, we directly use the task vectors corresponding to the publicly available fine-tuned checkpoints; for the merged model, we fine-tune for one epoch to obtain checkpoints, from which the task vectors are generated.

\newpage
\section{More Results and Discussions for Section~\ref{sec:ablation_study}}
\label{sec:ablation_appendix}
In this section, we first present more experimental details about ablation study. Then, we further analyze the effect of $\mathrm{token\_num}$ in RoBERTa-Base.

\textbf{Experimental Details.}  To observe the effect of different initialization schemes, we follow the same construction and adaptation settings as in the experiments on ViT-B/32, while only varying the initialization scheme. For the shape of FDAs, we also follow the same construction settings and keep the batch size and learning rates in the adaptation, while varying the shape of FDAs and the adaptation steps. For the effect of distance function, we only varies the distance functions both in construction and adaptation process. For optimization steps, we only vary the optimization steps in the construction phase.

\begin{table}[h!]
\scriptsize
\centering
\setlength{\tabcolsep}{9pt}
\renewcommand{\arraystretch}{1.3}
\setlength{\abovecaptionskip}{4pt}
\setlength{\belowcaptionskip}{-2pt}
\label{tab:roberta_token_num}
\begin{tabular}{c|cccccccc|c}
$\mathrm{token\_num}$ & CoLA & SST-2 & MRPC & \textbf{STSB} & QQP & MNLI & QNLI & RTE & Avg \\
\shline
5  & 0.2178 & 0.9232 & 0.8144 & 0.4256 & 0.8019 & 0.7065 & 0.7928 & 0.7365 & 0.6773 \\
10 & 0.2685 & 0.9197 & 0.8043 & 0.1604 & 0.8214 & 0.7256 & 0.8215 & 0.7365 & 0.6572 \\
20 & 0.2421 & 0.9243 & 0.8082 & 0.0904 & 0.8226 & 0.7410 & 0.8294 & 0.7148 & 0.6466 \\
\hline
5  & 0.2020 & 0.9243 & 0.8046 & --     & 0.8030 & 0.7137 & 0.8078 & 0.7112 & 0.7100 \\
10 & 0.2432 & 0.9255 & 0.8004 & --     & 0.8206 & 0.7289 & 0.8252 & 0.7184 & 0.7232 \\
20 & 0.2468 & 0.9186 & 0.7981 & --     & 0.8218 & 0.7487 & 0.8270 & 0.7148 & 0.7251 \\
\end{tabular}
\caption{\scriptsize  Performance on each dataset of RoBERTa-Base with different $\mathrm{token\_num}$. 
The upper part includes STSB, while the lower part excludes STSB (STSB is denoted as ``--'').}
\label{tab:token_appendix}
\end{table}

\textbf{Further analysis on $\mathrm{token\_num}$.} 
As shown in the Figure~\ref{shape_of_FDAs}, we observe that the average performance decreases when the $\mathrm{token\_num}$ increases from $5$ to $20$ for RoBERTa-Base models, which appears to contradict the trends observed in ViT-B/32. We further analyze the performance of each dataset. As shown in Table \ref{tab:token_appendix}, we find that the performance drop is mainly attributed to STSB. Therefore, we conduct merging without STSB and  observe that the performance consistently increases with larger $\mathrm{token\_num}$, which is consistent with the trend in ViT-B/32. We hypothesize that FDAs yield better performance when their shape is closer to that of the real data space.
\end{document}